\newcommand\norm[1]{\left\lVert#1\right\rVert}
\newcommand{\shortname}{\texttt{FedGWC}\xspace}
\DeclareRobustCommand\onedot{\futurelet\@let@token\@onedot}
\def\@onedot{\ifx\@let@token.\else.\null\fi\xspace}
\def\eg{\emph{e.g}\onedot} 
\def\ie{\emph{i.e}\onedot}
\theoremstyle{plain}
\newtheorem{theorem}{Theorem}[section]
\newtheorem{proposition}[theorem]{Proposition}
\theoremstyle{definition}
\newtheorem{definition}[theorem]{Definition}
\theoremstyle{remark}
\newcommand\blfootnote[1]{%
  \begingroup
  \renewcommand\thefootnote{}\footnote{#1}%
  \addtocounter{footnote}{-1}%
  \endgroup
}
\newif\ifcomment
\definecolor{red200}{HTML}{EF9A9A}
\definecolor{red400}{HTML}{EF5350}
\definecolor{blue50}{HTML}{E3F2FD}
\definecolor{blue100}{HTML}{BBDEFB}
\definecolor{blue200}{HTML}{90CAF9}
\definecolor{green200}{HTML}{A5D6A7}
\definecolor{yellow500}{HTML}{FFEB3B}
    \newcommand{\marco}[1]{\sethlcolor{red200}\hl{[\textbf{Marco:} #1]}}
    \newcommand{\eros}[1]{\sethlcolor{yellow500}\hl{[\textbf{Eros:} #1]}}
    \newcommand{\marco}[1]{}
    \newcommand{\eros}[1]{}
\icmltitlerunning{Interaction-Aware Gaussian Weighting for Clustered Federated Learning}
\begin{document}

\twocolumn[
\icmltitle{Interaction-Aware Gaussian Weighting for Clustered Federated Learning}

\icmlsetsymbol{equal}{*}
\icmlsetsymbol{start}{$\dagger$}

\begin{icmlauthorlist}
\icmlauthor{Alessandro Licciardi}{equal,disma,infn}
\icmlauthor{Davide Leo}{equal,dauin}
\icmlauthor{Eros Fan\'i }{dauin}
\icmlauthor{Barbara Caputo}{dauin}
\icmlauthor{Marco Ciccone}{start,vector}
\end{icmlauthorlist}

\icmlaffiliation{dauin}{Department of Computing and Control Engineering, Polytechnic University of Turin, Italy}
\icmlaffiliation{disma}{Department of Mathematical Sciences, Polytechnic University of Turin, Italy}
\icmlaffiliation{vector}{Vector Institute, Toronto, Ontario, Canada}
\icmlaffiliation{infn}{Istituto Nazionale di Fisica Nucleare (INFN), Sezione di Torino, Turin, Italy}

\icmlcorrespondingauthor{Alessandro Licciardi}{alessandro.licciardi@polito.it}

\icmlkeywords{Machine Learning, Federated Learning, Clustered Federated Learning}

\vskip 0.3in
]

\blfootnote{* Equal contribution, $\dagger$ Project started when the author was at Polytechnic University of Turin.  \textsuperscript{1} Department of Mathematical Sciences,
Polytechnic University of Turin, Italy - \textsuperscript{2} Istituto Nazionale di Fisica
Nucleare (INFN), Turin, Italy - \textsuperscript{3} Department of
Computing and Control Engineering, Polytechnic University of
Turin, Italy - \textsuperscript{4} Vector Institute, Toronto, Ontario, Canada. Correspondence to: Alessandro Licciardi \textit{alessandro.licciardi@polito.it}}
\begin{abstract}

Federated Learning (FL) emerged as a decentralized paradigm to train models while preserving privacy. However, conventional FL struggles with data heterogeneity and class imbalance, which degrade model performance.
Clustered FL balances personalization and decentralized training by grouping clients with analogous data distributions, enabling improved accuracy while adhering to privacy constraints. This approach effectively mitigates the adverse impact of heterogeneity in FL.
In this work, we propose a novel clustered FL method, \shortname (Federated Gaussian Weighting Clustering), which groups clients based on their data distribution, allowing training of a more robust and personalized model on the identified clusters. \shortname identifies homogeneous clusters by transforming individual empirical losses to model client interactions with a Gaussian reward mechanism. Additionally, we introduce the \textit{Wasserstein Adjusted Score}, a new clustering metric for FL to evaluate cluster cohesion with respect to the individual class distribution. Our experiments on benchmark datasets show that \shortname outperforms existing FL algorithms in cluster quality and classification accuracy, validating the efficacy of our approach.
\end{abstract}

\section{Introduction}

Federated Learning (FL) \citep{mcmahan2017communication} has emerged as a promising paradigm for training models on decentralized data while preserving privacy. Unlike traditional machine learning frameworks, FL enables collaborative training between multiple clients without requiring data transfer, making it particularly attractive in privacy-sensitive domains \citep{bonawitz2019federated}. FL was introduced primarily to address two major challenges in decentralized scenarios: ensuring privacy \citep{kairouz2021advances} and reducing communication overhead \citep{hamer2020fedboost, asad2020fedopt}. 
In particular, FL algorithms must guarantee \textit{communication efficiency}, to reduce the burden associated with the exchange of model updates between clients and the central server, while maintaining \textit{privacy}, as clients should not expose their private data during the training process.

A core challenge in federated learning is data heterogeneity \citep{li2020federated}. This manifests in two key ways: through imbalances in data quantity and class distribution both within and across clients, and through the non-independent and non-identical distribution (non-IID) of data across the federation. In this scenario, a single global model often fails to generalize due to clients contributing with updates from skewed distributions, leading to degraded performance \citep{zhao2018federated, caldarola2022improving} compared to the centralized counterpart. Furthermore, noisy or corrupted data from some clients can further complicate the learning process \citep{cao2020fltrust,zhang2022fldetector}, while non-IID data often results in unstable convergence and conflicting gradient updates \citep{hsieh2020non, zhao2018federated}. Despite the introduction of various techniques to mitigate these issues, such as regularization methods \citep{li2020federated}, momentum \citep{mendieta2022local}, and control variates \citep{karimireddy2020scaffold}, data heterogeneity remains a critical unsolved problem.

In this work, we address the fundamental challenges of data heterogeneity and class imbalance in federated learning through a novel clustering-based approach. We propose \shortname (Federated Gaussian Weighting Clustering), a method that groups clients with similar data distributions into clusters, enabling the training of personalized federated models for each group. Our key insight is that clients' data distributions can be inferred by analyzing their empirical loss functions, rather than relying on model updates as most existing approaches do. Inspired by \citep{cho2022towards}, we hypothesize that clients with similar data distributions will exhibit similar loss landscapes.
\shortname implements this insight using a Gaussian reward mechanism to form homogeneous clusters based on an \textit{interaction matrix}, which encodes the pairwise similarity of clients' data distributions. This clustering is achieved efficiently by having clients communicate only their empirical losses to the server at each communication round. Our method transforms these loss values to estimate the similarity between each client's data distribution and the global distribution, using Gaussian weights as statistical estimators.
Each cluster then trains its own specialized federated model, leveraging the shared data characteristics within that group. This approach preserves the knowledge-sharing benefits of federated learning while reducing the negative effects of statistical heterogeneity, such as client drift \citep{karimireddy2020scaffold}. We develop a comprehensive mathematical framework for this approach and rigorously prove the convergence properties of the Gaussian weights estimators.

To evaluate our method, we introduce the \textit{Wasserstein Adjusted Score}, a new clustering metric tailored for assessing cluster cohesion in class-imbalanced FL scenarios. Through extensive experiments on both benchmark \citep{caldas2018leaf} and large-scale datasets \citep{hsu2020federated}, we demonstrate that \shortname outperforms existing clustered FL algorithms in terms of both accuracy and clustering quality. Furthermore, \shortname can be integrated with any robust FL aggregation algorithm to provide additional resilience against data heterogeneity. \\
\textbf{Contributions.}
\begin{itemize}[leftmargin=*]
\setlength{\itemsep}{-.5em} 
    \item We propose \shortname, an efficient federated learning framework that clusters clients based on their data distributions, enabling personalized models that better handle heterogeneity.
    \item We provide a rigorous mathematical framework to motivate the algorithm, proving its convergence properties and providing theoretical guarantees for our clustering approach.
    \item We introduce a novel clustering metric specifically designed to evaluate cluster quality in the presence of class imbalance. 
    \item We demonstrate through extensive experiments that 1) \shortname outperforms existing clustered FL approaches in terms of both clustering quality and model performance, 2) our method successfully handles both class and domain imbalance scenarios, and 3) the framework can be effectively integrated with any FL aggregation algorithm.
\end{itemize}

\section{Related Work}

\paragraph{FL with Heterogeneous Data.} Handling data heterogeneity, especially class imbalance, remains a critical challenge in FL. \texttt{FedProx}\citep{li2020federated} was one of the first attempts to address heterogeneity by introducing a proximal term that constrains local model updates close to the global model. \texttt{FedMD} \citep{li2019fedmd} focuses on heterogeneity in model architectures, allowing collaborative training between clients with different neural network structures using model distillation. Methods such as \texttt{SCAFFOLD} \citep{karimireddy2020scaffold} and \texttt{Mime} \citep{karimireddy2020mime} have also been proposed to reduce client drift by using control variates during the optimization process, which helps mitigate the effects of non-IID data. Furthermore, strategies such as biased client selection \citep{cho2022towards} based on ranking local losses of clients and normalization of updates in \texttt{FedNova} \citep{wang2021novel} have been developed to specifically address class imbalance in federated networks, leading to more equitable global model performance.\vspace{-1.5em}
\paragraph{Clustered FL.} Clustering has proven to be an effective strategy in FL for handling client heterogeneity and improving personalization \citep{huang2022collaboration, duan2021fedgroup, briggs2020federated, Caldarola_2021_CVPR, ye2023personalized}. Clustered FL \citep{sattler2020clustered} is one of the first methods proposed to group clients with similar data distributions to train specialized models rather than relying on a single global one. Nevertheless, from a practical perspective, this method exhibits pronounced sensitivity to hyper-parameter tuning, especially concerning the gradient norms threshold, which is intricately linked to the dataset. This sensitivity can result in significant issues of either excessive under-splitting or over-splitting. Additionally, as client sampling is independent of the clustering, there may be privacy concerns due to the potential for updating cluster models with the gradient of a single client. An extension of this is the efficient framework for clustered FL proposed by \citep{ghosh2020efficient}, which strikes a balance between model accuracy and communication efficiency. Multi-Center FL \citep{long2023multi} builds on this concept by dynamically adjusting client clusters to achieve better personalization, however a-priori knowledge on the number of clusters is needed. Similarly, \texttt{IFCA} \citep{ghosh2020efficient} addresses client heterogeneity by predefining a fixed number of clusters and alternately estimating the cluster identities of the users by optimizing model parameters for the user clusters via gradient descent. However, it imposes a significant computational burden, as the server communicates all cluster models to each client, which must evaluate every model locally to select the best fit based on loss minimization. This approach not only increases communication overhead but also introduces inefficiencies, as each client must test all models, making it less scalable in larger networks. 

Compared to previous approaches, the key advantage of the proposed algorithm, \shortname, lies in its ability to effectively identify clusters of clients with similar levels of heterogeneity and class distribution through simple transformations of the individual empirical loss process. This is achieved without imposing significant communication overhead or requiring additional computational resources. Further details on the computational and communication overhead are provided in Appendix \ref{app:communication-computation}. Additionally, \shortname can be seamlessly integrated with any aggregation method, enhancing its robustness and performance when dealing with heterogeneous scenarios.

\section{Problem Formulation}
\label{section:formulation}
Consider a standard FL scenario \citep{mcmahan2017communication} with $K$ clients and a central server. FL typically addresses the following optimization problem $\min_{\theta \in \Theta} \mathcal{L}(\theta) = \min_{\theta \in \Theta} \sum_{k=1}^K \frac{n_k}{n} \mathcal{L}_k(\theta),$
where $\mathcal{L}_k(\cdot)$ represents the loss function of client $k$, $n_k$ is the number of training samples on client $k$, $n = \sum_{k=1}^K n_k$ is the total number of samples, and $\Theta$ denotes the model's parameters space.  
At each communication round $t \in [T]$, a subset $\mathcal{P}_t$ of clients is selected to participate in training. Each participating client performs $S$ iterations, updating its local parameters using a stochastic optimizer, \eg Stochastic Gradient Descent (SGD).  
In clustered FL, the objective is to partition clients into non-overlapping groups $\mathcal{C}^{(1)}, \dots, \mathcal{C}^{(n_\text{cl})}$ based on similarities in their data distributions, with each group having its own model, $\theta_{(1)}, \dots, \theta_{(n_\text{cl})}$.
\begin{figure}[t]
    \centering
    \includegraphics[width=\linewidth]{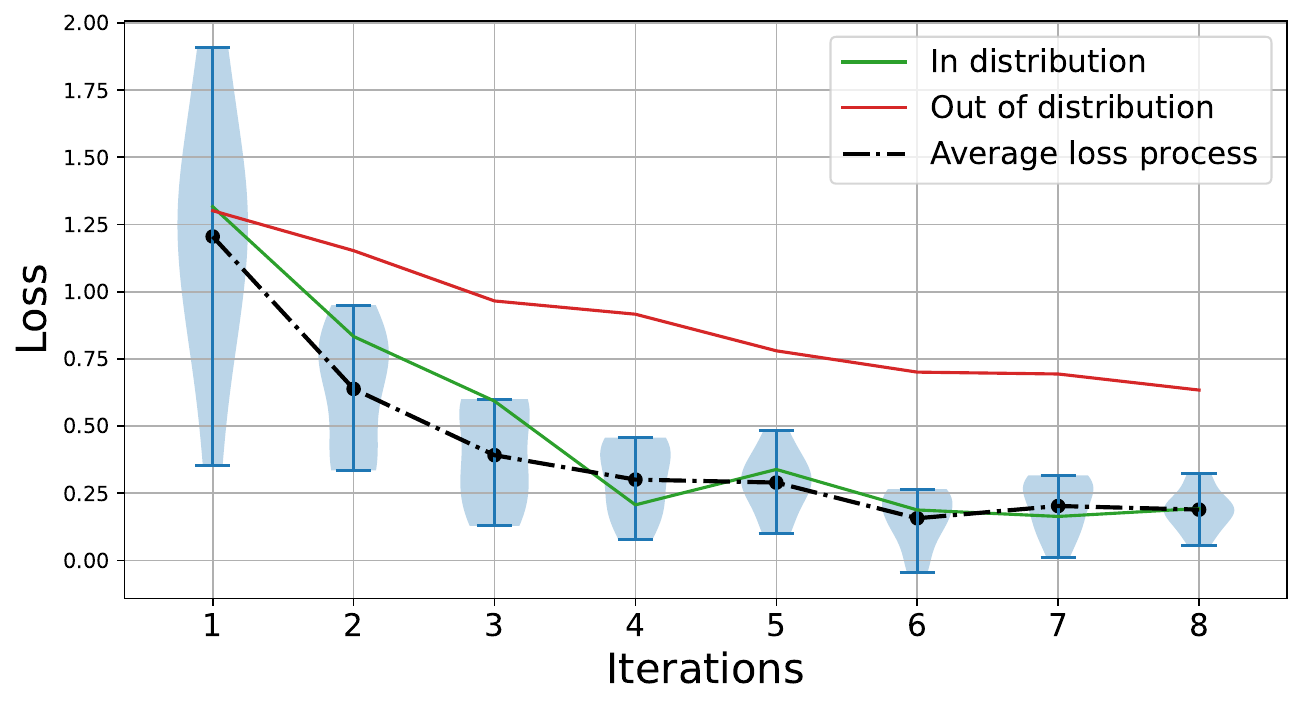}
    \vspace{-2.7em}
   {\caption{\small{Illustration of the Gaussian reward mechanism for two clients from Cifar100 (Dirichlet $\alpha = 0.05$, 10 sampled clients per round and $S = 8$ local iterations).
    The dashed line represents the average loss process $m^{t,s}$, with the blue region indicating the confidence interval $m^{t,s} \pm \sigma^{t,s}$ at fixed $t$, $s = 1,\dots, 8$. The green curve corresponds to an in-distribution client, whose loss remains within the confidence region, resulting in a high Gaussian reward. The red line represents an out-of-distribution client, whose loss lies outside the confidence region, resulting in a lower reward. } %
    }\label{fig:mechanism}

}\vspace{-2em}
\end{figure}
\section{\shortname Algorithm}\label{sec:algorithm_new}
In this section, we present the core components of \shortname in a progressive manner. First, Subsection \ref{section:gaussian_rewards} introduces the Gaussian Weighting mechanism, a statistical method that estimates how well each client's data distribution aligns with the overall federation. Subsection \ref{section:clustering} then explains how we model and detect clusters by analyzing interactions between client distributions. 
For clarity, we first present these concepts considering the federation as a single cluster, focusing on the fundamental mechanisms that enable client grouping. Finally, Subsection \ref{sec:alg_fedgw} introduces the complete algorithmic framework, including cluster indices and the iterative structure that allows \shortname to partition clients into increasingly homogeneous groups.

\subsection{Gaussian Weighting Mechanism}
\label{section:gaussian_rewards} 
To assess how closely the local data of each client aligns with the global distribution, we introduce the \textit{Gaussian Weights} $\gamma_k$, statistical estimators that capture the closeness of each clients' distribution to the main distribution of the cluster. A weight near zero suggests that the client's distribution is far from the main distribution. We graphically represent the idea of the Gaussian rewards in Figure \ref{fig:mechanism}. 

The fundamental principle of \shortname is to group clients based on the similarity of their empirical losses, which are used to assign a \textit{reward} between 0 and 1 to each client at each local iteration. A high reward indicates that a client's loss is close to the cluster's mean loss, while a lower reward reflects greater divergence. Gaussian weights estimate the expected value of these rewards, quantifying the closeness between each client’s distribution and the global one. 

Every communication round $t$, each sampled client $k \in \mathcal{P}_t$ communicates the server the empirical loss process $l_k^{t,s} = \mathcal{L}(\theta_k^{t,s})$, for $s = 1,\dots,S$, alongside the updated model $\theta_k^{t+1}$. The server computes the \textit{rewards} as
\begin{equation}\label{eq:reward}
r_k^{t,s} = \exp{\left(- \dfrac{(l_k^{t,s} - m^{t,s})^2}{2(\sigma^{t,s})^2}\right)} \in (0,1)\quad,
\end{equation}
where $m^{t, s} = 1/|\mathcal{P}_t| \sum_{k \in \mathcal{P}_t} l_k^{t,s}$ is the average loss process w.r.t. the local iterations $s = 1,\dots, S$, and $({\sigma}^{t,s})^2 = 1/(|\mathcal{P}_t|-1) \sum_{k \in \mathcal{P}_t} (l_k^{t,s} - m^{t, s})^2 $ is the sample variance. The rewards $r_k^{t,s} \to 1$ as the distance between $l_k^{t,s}$ and the average process $m^{t, s}$ decreases, while $r_k^{t,s} \to 0$ as the distance increases.
In Eq. \ref{eq:reward}, during any local iteration $s$, a Gaussian kernel centered on the mean loss and with spread the sample variance assesses the clients' proximity to the confidence interval's center, indicating their probability of sharing the same learning process distribution. Since the values of the rewards can suffer from stochastic oscillations, we reduce the noise in the estimate by averaging the rewards over the  $S$ local iterations, thus obtaining $\omega_k^t = 1/S \sum_{s \in [S]} r_k^{t,s}$ for each sampled client $k$. Instead of using a single sample, such as the last value of the loss as done in \cite{cho2022towards}, we opted for averaging across iterations to provide a more stable estimate. However, the averaged rewards $\omega_k^t$ depend on the sampled set of clients $\mathcal{P}_t$ and on the current round. Hence, we introduce the \textbf{Gaussian weights} $\gamma_k^t$  to keep track over time of these rewards. The Gaussian weights are computed via a running average of the instant rewards $\omega_k^t$. In particular, for each client $k$ the weight $\gamma_k^0$ is initialized to $0$ to avoid biases in the expectation estimate, and when it is randomly selected is update according to
\begin{equation}\label{eq:robbins_monro_weights}
    \gamma_k^{t+1} = (1-\alpha_t) \gamma_k^t + \alpha_t \omega_k^t
\end{equation}
for a sequence of coefficients $\alpha_t \in (0,1)$ for any $t \in [T]$. The weight definition in Eq.\ref{eq:robbins_monro_weights} is closely related to the Robbins-Monro stochastic approximation method \citep{robbins1951stochastic}. If a client is not participating in the training, its weight is not updated.
\shortname mitigates biases in the estimation of rewards by employing two mechanisms: (1) uniform random sampling method for clients, with a dynamic adjustment process to prioritize clients that are infrequently sampled, thus ensuring equitable participation across time periods; and (2) when a client is not sampled in a round, its weight and contribution to the reward estimate remain unchanged.\vspace{-1em}
\subsection{Modeling Interactions with Gaussian Weighs}\label{section:clustering}
\paragraph{Interaction Matrix.} Gaussian weights are scalar quantities that offer an absolute measure of the alignment between a client's data distribution and the global distribution. Although these weights indicate the conformity of each client's distribution individually, they do not consider the interrelations among the distributions of different clients.
Therefore, we propose to encode these interactions in an \textit{interaction matrix} $P^t \in \mathbb{R}^{K\times K}$ whose element $P_{kj}^t$ estimates the similarity between the $k$-th and the $j$-th client data distribution. The interaction matrix is initialized to the null matrix, \ie $P_{kj}^0 = 0$ for every couple $k,j \in [K]$.

Specifically, we define the update rule for the matrix $P^{t}$ as follows:
\begin{equation}\label{inter_matrix}
    P_{kj}^{t+1} = 
    \begin{cases}
        (1-\alpha_t) P_{kj}^t + \alpha_t \omega_k^t, & (k,j) \in \mathcal{P}_t \times \mathcal{P}_t\\
        P_{kj}^t, & (k,j) \notin \mathcal{P}_t \times \mathcal{P}_t
    \end{cases}
\end{equation}

where $\{\alpha_t\}_t$ is the same sequence used to update the weights, and $\mathcal{P}_t$ is the subset of clients sampled in round $t$.

Intuitively, in the long run, since $\omega_k^t$ measures the proximity of the loss process of client $k$ to the average loss process of clients in $\mathcal{P}_t$ at round $t$, we are estimating the \textit{expected perception} of client $k$ by client $j$ with $P_{kj}^t$, \ie a larger value indicates a higher degree of similarity between the loss profiles, whereas smaller values indicate a lower degree of similarity. For example, if $P_{kj}^t$ is close to $1$, it suggests that on average, whenever $k$ and $j$ have been simultaneously sampled prior to round $t$, $\omega_k^t$ was high, meaning that the two clients are well-represented within the same distribution.

To effectively extract the information embedded in $P$, we introduce the concept of \textit{unbiased perception vectors} (UPV). For any pair of clients $k, j \in [K]$, the UPV $v_k^j \in \mathbb{R}^{K-2}$ represents the $k$-th row of $P$, excluding the $k$-th and $j$-th entries. Recalling the construction of $P^t$, where each row indicates how a client is perceived to share the same distribution as other clients in the federation, the UPV $v_k^j$ captures the collective perception of client $k$ by all other clients, excluding both itself and client $j$. This exclusion is why we refer to $v_k^j$ as \textit{unbiased}. 

The UPVs encode information about the relationships between clients, which can be exploited for clustering. However, the UPVs cannot be directly used as their entries are only aligned when considered in pairs. Instead, we construct the \textit{affinity matrix} $W$ by transforming the information encoded by the UPVs through an RBF kernel, as this choice allows to effectively model the affinity between clients: two clients are considered \textit{affine} if similarly perceived by others. This relation is encoded by the entries of $W \in \mathbb{R}^{K \times K}$, which we define as:
\begin{equation}
    W_{kj} = \mathcal{K}(v_k^j, v_j^k) = \exp\left(-\beta\norm{v_k^j-v_j^k}^2\right).
\end{equation}
The spread of the RBF kernel is controlled by a single hyper-parameter $\beta>0$: changes in this value provide different clustering outcomes, as shown in the sensitivity analysis in Appendix \ref{app:sensitive}.
\vspace{-1em}
\paragraph{Clustering.}
The affinity matrix $W$, designed to be symmetric, highlights features that capture similarities between clients' distributions. Clustering is performed by the server using the rows of $W$ as feature vectors, as they contain the relevant information. We apply the spectral clustering algorithm \citep{ng2001spectral} to $W$ due to its effectiveness in detecting non-convex relationships embedded within the client affinities. Symmetrizing the interaction matrix $P$ into the affinity matrix $W$ is fundamental for spectral clustering as it refines inter-client relationship representation. It models interactions, reducing biases, and emphasizing reliable similarities. This improves robustness to noise, allowing spectral clustering to effectively detect the distributional structure underlying the clients' network \citep{von2007tutorial}.
During the iterative training process, the server determines whether to perform clustering by checking the convergence of the matrix $P^t$. Convergence is numerically verified when the mean squared error (MSE) between consecutive updates is less than a small threshold $\epsilon > 0$. Algorithm \ref{alg:fedgwcluster} summarizes the clustering procedure.
If the MSE is below $\epsilon$, the server computes the matrix $W^t$ and performs spectral clustering over $W^t$ with a number of clusters $n \in \{2, \dots, n_{max}\}$. For each clustering outcome, the Davies-Bouldin (DB) score \citep{davies1979cluster} is computed: DB larger than one means that clusters are not well separated, while if it is smaller than one, the clusters are well separated, a detailed description of the clustering metrics is provided in Appendix \ref{app:clustering}. We denote by $n_{cl}$ the optimal number of clusters detected by \shortname. If $\min_{n = 2,\dots, n_{max}} DB_n > 1$, we do not split the current cluster. Hence, the optimal number of clusters is $n_{cl}$ is one. In the other case, the optimal number of clusters is  $n_{cl} \in \arg \min_{n = 2, \dots, n_{max}} DB_{n}$. This requirement ensures proper control over the over-splitting phenomenon, a common issue in hierarchical clustering algorithms in FL which can undermine key principles of FL by creating degenerate clusters with very few clients. %
Finally, on each cluster $\mathcal{C}^{(1)},\dots, \mathcal{C}^{(n_{cl})}$ an FL aggregation algorithm is trained separately, resulting in models $\theta_{(1)}, \dots, \theta_{(n_{cl})}$ personalized for each cluster.

\subsection{Recursive Implementation}\label{sec:alg_fedgw} 
In the previous sections, we have detailed the splitting algorithm within the individual clusters.  In this section, we present the full \shortname procedure (Algorithm \ref{alg:fedgw_recursion}), introducing the complete notation with indices for the distinct clusters. We denote the clustering index as $n$, and the total number of clusters $N_{cl}$.

The interaction matrix $P^0_{(1)}$ is initialized to the null matrix $0_{K \times K}$, and the \textit{total number of clusters} $N_{cl}^0$, as no clusters have been formed yet, and $\textrm{MSE}_{(1)}^0$ are initialized to 1, in order to ensure stability in early updates, allowing a gradual decrease.  At each communication round $t$, and for cluster $\mathcal{C}^{(n)}$, where $n = 1, \dots, N_{cl}^t$, the cluster server independently samples the participating clients $\mathcal{P}_t^{(n)} \subseteq \mathcal{C}^{(n)}$. Each client $k \in \mathcal{P}_t^{(n)}$ receives the current cluster model $\theta_{(n)}^t$. After performing local updates, each client sends its updated model $\theta_k^{t+1}$ and empirical loss $l_k^t$ back to the cluster server. The server aggregates these updates to form the new cluster model $\theta_{(n)}^{t+1}$, computes the Gaussian rewards $\omega_k^t$ for the sampled clients, and updates the interaction matrix $P_{(n)}^{t+1}$ and $\textrm{MSE}_{(n)}^{t+1}$ according to Eq. \ref{inter_matrix}. If $\textrm{MSE}_{(n)}^{t+1}$ is lower than a threshold $\epsilon$, the server of the cluster performs clustering to determine whether to split cluster $\mathcal{C}^{(n)}$ into $n_{cl}$ sub-groups, as outlined in Algorithm \ref{alg:fedgwcluster}. The matrix $P_{(n)}^{t+1}$ is then partitioned into sub-matrices by filtering its columns and rows according to the newly formed clusters, with the MSE for these sub-matrices reinitialized to 1. This process results in a distinct model $\theta_{(n)}$ for each cluster $\mathcal{C}^{(n)}$. When the final iteration $T$ is reached we are left with $N_{cl}^T$ clusters with personalized models $\theta_{(n)}$ for $n = 1,\dots, N_{cl}^T$.

Thanks to the Gaussian Weights, and the iterative spectral clustering on the affinity matrices, our algorithm, \shortname, is able to autonomously detect groups of clients that display similar levels of heterogeneity.
The clusters formed are more uniform, \ie the class distributions within each group are more similar. These results are supported by experimental evaluations, discussed in Section \ref{sect:ablation}.

\section{Theoretical Results and Derivation of \shortname}\label{sec:theory}
In this section, we provide a formal derivation of the algorithm, discussing the mathematical properties of Gaussian Weights and outlining the structured formalism and rationale underlying \shortname.

The stochastic process induced by the optimization algorithm in the local update step, allows the evolution of the empirical loss to be modeled using random variables within a probabilistic framework. We denote random variables with capital letters (\eg , $X$), and their realizations with lowercase letters (\eg, $x$).

The observed loss process $l_k^{t,s}$ is the outcome of a stochastic process $L_k^{t,s}$, and the rewards $r_k^{t,s}$, computed according to Eq. \ref{eq:reward}, are samples from a random reward $R_k^{t,s}$ supported in $(0,1)$, whose expectation $\mathbb{E}[R_k^{t,s}]$ is lower for out-of-distribution clients and higher for in-distribution ones. To estimate the expected reward $\mathbb{E}[R_k^{t,s}]$ we introduce the r.v. $\Omega_k^t = 1/S \sum_{s \in [S]} R_k^{t,s}$, which is an estimator less affected by noisy fluctuations in the empirical loss. Due to the linearity of the expectation operator, the expected reward $\mathbb{E}[R_k^{t,s}]$ for the $k$-th client at round $t$, local iteration $s$ equals the expected Gaussian reward $ \mathbb{E}[\Omega_k^t]$ that, to simplify the notation, we denote by $\mu_k$. $\mu_k$ is the theoretical value that we aim to estimate by designing our Gaussian weights $\Gamma_k^t$ appropriately, as it encodes the ideal reward to quantify the closeness of the distribution of each client $k$ to the main distribution. Note that the process is stationary by construction. Therefore, it does not depend on $t$ but differs between clients, as it reaches a higher value for in-distribution clients and a lower for out-of-distribution clients.

To rigorously motivate the construction of our algorithm and the reliability of the weights, we introduce the following theoretical results. Theorems \ref{thm_main:1} and \ref{thm_main:weak_conv} demonstrate that the weights converge to a finite value and, more importantly, that this limit serves as an unbiased estimator of the theoretical reward $\mu_k$. The first theorem provides a strong convergence result, showing that, with suitable choices of the sequence $\{\alpha_t\}_t$, the expectation of the Gaussian weights $\Gamma_k^t$ converges to $\mu_k$ in $L^2$ and almost surely. In addition, Theorem \ref{thm_main:weak_conv} extends this to the case of constant $\alpha_t$, proving that the weights still converge and remain unbiased estimators of the rewards as $t \to \infty$. 
\begin{theorem}\label{thm_main:1}
Let $\{\alpha_t\}_{t = 1}^\infty$ be a sequence of positive real values, and $\{\Gamma_k^t\}_{t=1}^\infty$ the sequence of Gaussian weights. If $\{\alpha_t\}_{t = 1}^\infty \in l^2(\mathbb{N})/l^1(\mathbb{N})$, then $\Gamma_k^t$ converges in $L^2$. Furthermore, for $t\to\infty$, 
\begin{equation}
    \Gamma_k^t \longrightarrow \mu_k\,\,\, a.s.
\end{equation}
\end{theorem}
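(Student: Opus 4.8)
The plan is to recognize the recursion in Eq.~\ref{eq:robbins_monro_weights} as a Robbins--Monro stochastic approximation scheme and to analyze it via a martingale decomposition. First I would write the error $e_k^t := \Gamma_k^t - \mu_k$ and, using $\mathbb{E}[\Omega_k^t] = \mu_k$ together with the fact that (by uniform sampling and the stationarity of the reward process noted in the text) $\Omega_k^t$ is conditionally unbiased for $\mu_k$ given the past, derive the recursion
\begin{equation}
    e_k^{t+1} = (1-\alpha_t)\, e_k^t + \alpha_t\, \xi_k^t, \qquad \xi_k^t := \Omega_k^t - \mu_k,
\end{equation}
on the rounds where $k$ is sampled (and $e_k^{t+1} = e_k^t$ otherwise). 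The noise terms $\xi_k^t$ are martingale differences with respect to the natural filtration, bounded since $\Omega_k^t \in (0,1)$ (hence with uniformly bounded conditional variance, say $\le \sigma^2 \le 1/4$). I would then solve the linear recursion explicitly: $e_k^t = \big(\prod_{i<t}(1-\alpha_i)\big) e_k^0 + \sum_{i<t}\alpha_i \big(\prod_{i<j<t}(1-\alpha_j)\big)\xi_k^i$, so that $\Gamma_k^t$ is the initial term (vanishing since $\Gamma_k^0 = 0$ is chosen precisely to kill this bias, as the text remarks) plus a weighted sum of martingale differences.

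Next, for the $L^2$ convergence, I would take $\mathbb{E}[(e_k^t)^2]$; cross terms vanish by the martingale-difference property, leaving $\mathbb{E}[(e_k^t)^2] \le \sigma^2 \sum_{i<t} \alpha_i^2 \big(\prod_{i<j<t}(1-\alpha_j)\big)^2$. Since $\{\alpha_t\} \in \ell^2(\mathbb{N})$, the bound $\sum_i \alpha_i^2 (\prod_{i<j<t}(1-\alpha_j))^2 \le \sum_i \alpha_i^2 < \infty$ shows the $L^2$ norms are uniformly bounded; and because $\{\alpha_t\} \notin \ell^1(\mathbb{N})$, the products $\prod_{i<j<t}(1-\alpha_j) \to 0$, which combined with dominated convergence on the tail of the $\ell^2$ series gives $\mathbb{E}[(e_k^t)^2] \to 0$. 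More carefully, one splits the sum at an index $m$: the head is controlled by the product going to $0$, the tail by $\sum_{i\ge m}\alpha_i^2$ going to $0$; this is the standard argument and I would present it compactly. That $\Gamma_k^t$ itself \emph{converges} in $L^2$ (not just that its distance to $\mu_k$ shrinks) then follows since the limit is identified as $\mu_k$.

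For the almost-sure statement I would invoke a standard supermartingale / stochastic-approximation convergence theorem (e.g.\ the Robbins--Siegmund lemma, or directly the martingale convergence theorem applied to the partial-sum martingale $M_t := \sum_{i<t} c_i \xi_k^i$ with appropriate normalizing weights $c_i$). Because $\sum_i \alpha_i^2 < \infty$ and the $\xi_k^i$ have bounded conditional second moments, the relevant series of conditional variances converges, so the martingale converges a.s.; transferring this back through the explicit solution of the recursion and using $\prod(1-\alpha_j)\to 0$ yields $e_k^t \to 0$ a.s., i.e.\ $\Gamma_k^t \to \mu_k$ a.s. One bookkeeping subtlety is the asynchronous updating: client $k$ is updated only on the random subsequence of rounds in which it is sampled. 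I would handle this by either restricting to the subsequence of update times (which is a.s.\ infinite under the uniform-sampling-with-priority rule described, so $\sum$ over update times of $\alpha$'s still diverges and of $\alpha^2$'s still converges) or by defining $\alpha_t = 0$ on non-sampled rounds and checking the same $\ell^1/\ell^2$ dichotomy survives. I expect this asynchronous-sampling accounting — making precise that the effective step-size sequence seen by each client retains the $\ell^2(\mathbb{N})\setminus\ell^1(\mathbb{N})$ property almost surely — to be the main technical obstacle; the rest is the textbook Robbins--Monro analysis.
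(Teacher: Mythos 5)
Your proposal is correct and takes essentially the same route as the paper: both identify the update in Eq.~\ref{eq:robbins_monro_weights} as a Robbins--Monro scheme with noise $\Omega_k^t$ unbiased around $\mu_k$ and conclude $L^2$ and almost-sure convergence under the $\ell^2(\mathbb{N})\setminus\ell^1(\mathbb{N})$ step-size condition. The only difference is that the paper delegates the conclusion to a citation of the stochastic-approximation literature, while you carry out the underlying martingale-decomposition argument explicitly and additionally flag the asynchronous (sampled-rounds-only) updating, a subtlety the paper's proof passes over in silence.
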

\begin{theorem}\label{thm_main:weak_conv}
Let $\alpha \in (0,1)$ be a fixed constant, then in the limit $t \to \infty$, the expectation of the weights converges to the individual theoretical reward $\mu_k$, for each client $k = 1,\dots, K$, \ie,
\begin{equation}
    \mathbb{E}[\Gamma_k^t]\longrightarrow \mu_k\,\,\,t\to\infty\,.
\end{equation}
\end{theorem}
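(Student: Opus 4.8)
The plan is to reduce the claim to a scalar affine recursion for $m_t := \mathbb{E}[\Gamma_k^t]$ and solve it in closed form. Note first that Theorem~\ref{thm_main:1} does not apply here, since a constant sequence $\alpha_t\equiv\alpha$ does not belong to $l^2(\mathbb{N})$, and indeed with constant $\alpha$ the weights themselves do \emph{not} converge (they keep fluctuating around a stationary regime); only their expectation stabilizes, which is exactly what the statement asks for. I would start by rewriting the update rule in Eq.~\ref{eq:robbins_monro_weights} with $\alpha_t\equiv\alpha$ in a single line that also encodes random participation: introducing the indicator $B_k^t := \mathbf{1}\{k\in\mathcal{P}_t\}$, and using that the weight is frozen on rounds in which $k$ is not sampled,
\begin{equation*}
\Gamma_k^{t+1} = \Gamma_k^t + \alpha\, B_k^t\,\bigl(\Omega_k^t-\Gamma_k^t\bigr),
\end{equation*}
which is the Robbins--Monro-type form already pointed out after Eq.~\ref{eq:robbins_monro_weights}.

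Next I would take expectations and decouple the three random objects on the right-hand side. The weight $\Gamma_k^t$ is measurable with respect to the history of rounds $1,\dots,t-1$, whereas $B_k^t$ is the fresh sampling randomness of round $t$; under the uniform sampling scheme these are independent with $\mathbb{E}[B_k^t]=p_k\in(0,1]$. The reward $\Omega_k^t\in(0,1)$ satisfies $\mathbb{E}[\Omega_k^t\mid k\in\mathcal{P}_t]=\mu_k$ for every $t$ by the stationarity of the reward process emphasized in the text, so that $\mathbb{E}[B_k^t\Omega_k^t]=p_k\mu_k$ and $\mathbb{E}[B_k^t\Gamma_k^t]=p_k\,m_t$. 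Substituting gives the affine recursion
\begin{equation*}
m_{t+1} = (1-\alpha p_k)\,m_t + \alpha p_k\,\mu_k .
\end{equation*}

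Then I would solve it: with $e_t := m_t-\mu_k$ the recursion reads $e_{t+1}=(1-\alpha p_k)e_t$, hence $e_t=(1-\alpha p_k)^t e_0$. Since $\Gamma_k^0=0$ by initialization, $e_0=-\mu_k$, and because $\alpha\in(0,1)$ and $p_k\in(0,1]$ the factor satisfies $0\le 1-\alpha p_k<1$; therefore $e_t\to 0$ geometrically and $\mathbb{E}[\Gamma_k^t]=\mu_k\bigl(1-(1-\alpha p_k)^t\bigr)\to\mu_k$, which proves the claim (and as a by-product yields the explicit convergence rate, showing the estimator is asymptotically unbiased with a finite-$t$ bias of order $(1-\alpha p_k)^t$).

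The main obstacle is not the algebra but justifying the decoupling step rigorously. Two points require care: (i) $\Omega_k^t$ depends on the entire sampled cohort $\mathcal{P}_t$ through $m^{t,s}$ and $(\sigma^{t,s})^2$, so one must argue that conditioning only on the event $\{k\in\mathcal{P}_t\}$ still leaves its mean equal to $\mu_k$ — this is precisely where the stationarity/exchangeability of the reward process is invoked to define $\mu_k$ consistently; and (ii) the dynamic adjustment to uniform sampling described in Section~\ref{section:gaussian_rewards} may make the participation probability time-dependent, $p_k^{(t)}$, in which case the recursion becomes $m_{t+1}=(1-\alpha p_k^{(t)})m_t+\alpha p_k^{(t)}\mu_k$ and one instead shows $e_t=e_0\prod_{s=1}^{t}(1-\alpha p_k^{(s)})$, which still vanishes as long as $p_k^{(s)}\ge\delta>0$ for all $s$ (guaranteed by the prioritization of infrequently sampled clients), since then $\prod_{s=1}^t(1-\alpha p_k^{(s)})\le(1-\alpha\delta)^t\to 0$. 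Getting these two measurability/independence points right is the crux of a complete argument.
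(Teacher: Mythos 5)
Your argument is correct, and at its core it is the same computation as the paper's: the paper unrolls $\gamma_k^{t+1}=(1-\alpha)\gamma_k^t+\alpha\omega_k^t$ explicitly, takes expectations term by term using $\mathbb{E}[\Omega_k^t]=\mu_k$, and lets the initial term $(1-\alpha)^t\gamma_k^0$ and the geometric series do the work; you instead take expectations first and solve the resulting affine recursion for $m_t=\mathbb{E}[\Gamma_k^t]$, which is the same geometric-decay argument packaged as a one-step contraction. The genuine difference is your explicit treatment of partial participation: the paper's proof silently assumes the update fires every round (equivalently, full participation), so its bias decays like $(1-\alpha)^t$, whereas you introduce the indicator $B_k^t$ and obtain $m_{t+1}=(1-\alpha p_k)m_t+\alpha p_k\mu_k$, hence the rate $(1-\alpha p_k)^t$ and an argument that still covers the "weight frozen when not sampled" rule stated in Section \ref{section:gaussian_rewards}. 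This buys a statement closer to the algorithm actually described, plus an explicit finite-$t$ bias, at the cost of the decoupling assumptions you flag: $\mathbb{E}[B_k^t\Gamma_k^t]=p_k m_t$ needs $B_k^t$ independent of the past (true for uniform sampling, but not obviously for the paper's dynamic prioritization of rarely sampled clients, where $\mathbb{E}[B_k^t\mid\mathcal{F}_{t-1}]$ is history-dependent and cannot simply be factored out — your fix (ii) handles deterministic $p_k^{(t)}$, not adaptive ones), and $\mathbb{E}[\Omega_k^t\mid k\in\mathcal{P}_t]=\mu_k$ needs the conditional mean to be cohort-independent, which is exactly the stationarity assumption the paper also invokes without proof. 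Your observation that Theorem \ref{thm_main:1} does not apply to constant $\alpha$ (not square-summable) and that only the expectation, not the weight itself, stabilizes is accurate and matches the intent of the statement.
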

Proposition \ref{prop_var_main} shows that Gaussian weights reduce the variance of the estimate, thus decreasing the error and enabling the construction of a confidence interval for $\mu_k$.
\begin{proposition}\label{prop_var_main}
The variance of the weights $\Gamma_k^t$ is smaller than the variance $\sigma_k^2$ of the theoretical rewards $R_k^{t,s}$.
\end{proposition}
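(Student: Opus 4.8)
The plan is to exploit the linear recursion defining the Gaussian weights in Eq.~\ref{eq:robbins_monro_weights} together with the stationarity of the reward process and its independence across communication rounds, closing an induction on $t$.

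First I would bound the one-round estimator $\Omega_k^t=\frac{1}{S}\sum_{s\in[S]}R_k^{t,s}$. Since the $R_k^{t,s}$ are identically distributed with variance $\sigma_k^2$, the Cauchy--Schwarz inequality applied to the covariances gives
\[
\mathrm{Var}(\Omega_k^t)=\frac{1}{S^2}\sum_{s,s'\in[S]}\mathrm{Cov}\!\left(R_k^{t,s},R_k^{t,s'}\right)\le\frac{1}{S^2}\left(\sum_{s\in[S]}\sigma_k\right)^{\!2}=\sigma_k^2,
\]
with equality only in the degenerate case $S=1$ or perfectly correlated within-round rewards; if in addition the $R_k^{t,s}$ are uncorrelated across $s$ one even gets the sharper $\mathrm{Var}(\Omega_k^t)=\sigma_k^2/S$. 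Thus averaging over local iterations already cannot increase the variance.

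Next comes the key structural observation: by construction $\Gamma_k^t$ is a deterministic function of the estimators $\Omega_k^{j}$ over the rounds $j<t$ in which client $k$ was sampled (and of the coefficients $\alpha_j$ and the sampling pattern), hence it is independent of $\Omega_k^t$. Expanding the variance of the update in Eq.~\ref{eq:robbins_monro_weights},
\[
\mathrm{Var}(\Gamma_k^{t+1})=(1-\alpha_t)^2\,\mathrm{Var}(\Gamma_k^t)+\alpha_t^2\,\mathrm{Var}(\Omega_k^t)+2(1-\alpha_t)\alpha_t\,\mathrm{Cov}(\Gamma_k^t,\Omega_k^t),
\]
the cross term vanishes (and for a round in which $k$ is not sampled $\Gamma_k^{t+1}=\Gamma_k^t$, so nothing changes). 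I would then run the induction: the base case is $\mathrm{Var}(\Gamma_k^0)=0$ since the weights are initialized deterministically, and assuming $\mathrm{Var}(\Gamma_k^t)\le\sigma_k^2$,
\[
\mathrm{Var}(\Gamma_k^{t+1})\le\big((1-\alpha_t)^2+\alpha_t^2\big)\sigma_k^2=\big(1-2\alpha_t(1-\alpha_t)\big)\sigma_k^2<\sigma_k^2
\]
for $\alpha_t\in(0,1)$. Carrying the sharper bound $\mathrm{Var}(\Omega_k^t)\le\sigma_k^2/S$ through the same recursion yields $\mathrm{Var}(\Gamma_k^t)\le\sigma_k^2/S$, which also supports the confidence interval for $\mu_k$ mentioned before the statement.

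The main obstacle is not the algebra but pinning down the probabilistic model: one must assume the reward process is stationary (so that $\sigma_k^2$ and $\mu_k$ carry no round index, consistently with the surrounding text) and that the per-round estimators $\Omega_k^t$ are independent of the past, since the covariance term is the only place dependence could enter. A secondary point is the bookkeeping for random client sampling — restricting the recursion to the subsequence of rounds in which $k$ participates — which leaves the monotone variance estimate intact.
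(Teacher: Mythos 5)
Your proof is correct, but it takes a genuinely different route from the paper's. The paper works with a constant $\alpha$, unrolls the recursion into its closed form $\Gamma_k^{t+1} = (1-\alpha)^{t+1}\gamma_k^0 + \sum_{\tau=0}^{t}\alpha(1-\alpha)^{\tau}\Omega_k^{t-\tau}$, assumes the per-round estimators are uncorrelated with $\mathrm{Var}(\Omega_k^t)=\sigma_k^2/S$ (i.e.\ within-round rewards uncorrelated), sums the geometric series, and takes $t\to\infty$ to obtain the exact limiting value $\frac{\alpha}{2-\alpha}\frac{\sigma_k^2}{S} < \frac{\sigma_k^2}{S} < \sigma_k^2$. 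You instead run a one-step variance recursion plus induction: the cross term $\mathrm{Cov}(\Gamma_k^t,\Omega_k^t)$ vanishes by independence from the past, and $(1-\alpha_t)^2+\alpha_t^2<1$ does the contraction. What your version buys: it holds at every finite $t$ rather than only asymptotically, it allows a time-varying sequence $\alpha_t\in(0,1)$ (matching the setting of Theorem \ref{thm_main:1}, whereas the paper's computation is tied to constant $\alpha$ as in Theorem \ref{thm_main:weak_conv}), and via the Cauchy--Schwarz bound $\mathrm{Var}(\Omega_k^t)\le\sigma_k^2$ it survives arbitrary correlation among the $S$ within-round rewards, which is arguably the more realistic case since consecutive local losses are strongly dependent; your sharper variant recovers the paper's $\sigma_k^2/S$ bound under within-round uncorrelatedness. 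What the paper's version buys is the explicit limiting constant $\frac{\alpha}{2-\alpha}\frac{\sigma_k^2}{S}$, which is tighter than $\sigma_k^2/S$ and quantifies the variance-reduction effect of the running average in terms of $\alpha$. Both arguments rest on the same idealization, which you state explicitly and the paper leaves implicit: independence (or at least uncorrelatedness) of $\Omega_k^t$ across rounds and stationarity of the reward process, and both handle non-participation rounds trivially since the weight is then frozen.
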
 

Complete proofs of Theorems \ref{thm_main:1}, \ref{thm_main:weak_conv}, and Proposition \ref{prop_var_main} are provided in Appendix \ref{app:fgw}. Additionally, Appendix \ref{app:fgw} includes further analysis of \shortname. Specifically, Proposition \ref{prop:bounded_matrix} demonstrates that the entries of the interaction matrix $P$ are bounded, while Theorem \ref{thm:samplerate} establishes a sufficient condition for conserving the sampling rate during the recursive procedure.

\section{Wasserstein Adjusted Score}\label{clustereing_metric}

In the previous Section we observed that when clustering clients according to different heterogeneity levels, the outcome must be evaluated using a metric that assesses the cohesion of individual distributions. In this Section, we introduce a novel metric to evaluate the performance of clustering algorithms in FL. This metric, derived from the Wasserstein distance \citep{kantorovich1942translocation}, quantifies the cohesion of client groups based on their class distribution similarities. 

We propose a general method for adapting clustering metrics to account for class imbalances. This adjustment is particularly relevant when the underlying class distributions across clients are skewed. The formal derivation and mathematical details of the proposed metric are provided in Appendix \ref{app:clustering}. We now provide a high-level overview of our new metric.

Consider a generic clustering metric $s$, e.g. Davies-Bouldin score \citep{davies1979cluster} or the Silhouette score \citep{rousseeuw1987silhouettes}. Let $C$ denote the total number of classes, and $x_i^k$ the empirical frequency of the $i$-th class in the $k$-th client's local training set. Following theoretical reasonings, as shown in Appendix \ref{app:clustering}, the empirical frequency vector for client $k$, denoted by $x_{(i)}^k$, is ordered according to the rank statistic of the class frequencies, \ie  $x_{(i)}^k \geq x_{(i+1)}^k$ for any $i = 1, \dots, C-1$.
The class-adjusted clustering metric $\tilde{s}$ is defined as the standard clustering metric $s$ computed on the ranked frequency vectors $x_{(i)}^k$.  Specifically, the distance between two clients $j$ and $k$ results in
\vspace{-.8em}
\begin{equation}\label{lab_dist_class}
    \dfrac{1}{C}\left(\sum_{i = 1}^C \left | x_{(i)}^k - x_{(i)}^j \right | ^2\right)^{1/2}\,.
\end{equation}
This modification ensures that the clustering evaluation is sensitive to the distributional characteristics of the class imbalance. As we show in Appendix \ref{app:clustering}, this adjustment is mathematically equivalent to assessing the dispersion between the empirical class probability distributions of different clients using the Wasserstein distance, also known as the Kantorovich-Rubenstein metric \citep{kantorovich1942translocation}. This equivalence highlights the theoretical soundness of using ranked class frequencies to better capture variations in class distributions when evaluating clustering outcomes in FL.

\section{Experiments}
In this section, we present the experimental results on widely used FL benchmark datasets \citep{caldas2018leaf} including real-world datasets \citep{hsu2020federated}, comparing the performance of \shortname with other baselines from the literature, including standard FL algorithms and clustering methods.  A detailed description of the implementation settings, datasets and models used for the evaluation are reported in Appendix \ref{app_details}.
\begin{figure}[t]
    \centering
    \includegraphics[width=1.\linewidth]{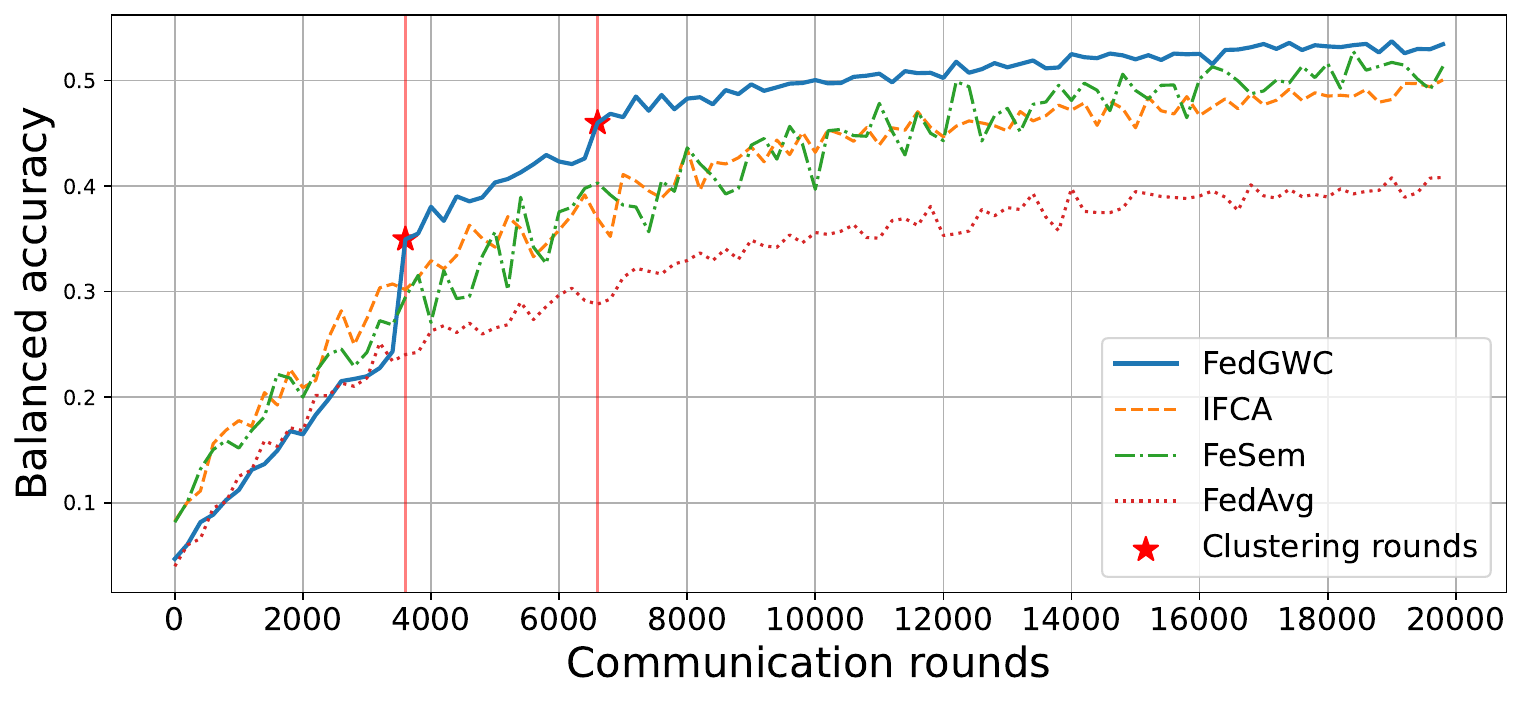}\vspace{-1.5em}
    \caption{\small{Balanced accuracy on Cifar100 for \shortname (blue curve) with \texttt{FedAvg} aggregation compared to the clustered FL baselines. \shortname detects two splits demonstrating significant improvements in accuracy when clustering is performed, leading also to a faster and more stable convergence than baseline algorithms.}}
    \label{fig:accuracy_jumpcifar100}
    \vspace{-1.3em}
\end{figure}
In Section \ref{exp}, we evaluate our method, \shortname, against various clustering algorithms, including \texttt{CFL} \citep{sattler2020clustered}, \texttt{FeSEM} \citep{long2023multi}, and \texttt{IFCA} \citep{ghosh2020efficient}, and standard FL aggregations \texttt{FedAvg} \citep{mcmahan2017communication}, \texttt{FedAvgM} \citep{asad2020fedopt}, FairAvg \citep{michieli2021all} and \texttt{FedProx} \citep{li2020federated}, showing also that how our approach is orthogonal to conventional FL aggregation methods.

In Section \ref{sec:large_scale}, we underscore that \shortname has the capability to surpass FL methods in real-world and large-scale scenarios \citep{hsu2020federated}.

Finally, in Section \ref{sect:ablation}, we propose analyses on class and domain imbalance, showing that our algorithm successfully detects clients belonging to separate distributions. Further experiments are presented in Appendix \ref{app:other}.

Each client has its own local train and test sets. Algorithm performance is evaluated by averaging the accuracy achieved on the local test sets across the federation, enabling a comparison between FL aggregation and clustered FL approaches (refer to Appendix \ref{app_details} for additional insights). When assessing clustering baselines, we also use the Wasserstein's Adjusted Silhouette Score (WAS) and Wasserstein's Adjusted Davies-Bouldin Score (WADB) to quantify the distributional cohesion among clients, an evaluation performed \textit{a posteriori}. For detection tasks in visual domains (Section \ref{sect:ablation}), we compute the Rand Index \citep{rand1971objective}, a clustering metric that compares the obtained clustering with a ground truth labeling. Further details on the chosen metrics are provided in Appendices \ref{app:metrics_choice} and \ref{app:clustering}.
\subsection{\shortname in heterogeneous settings}\label{exp}
In this section, we analyze the effectiveness of \shortname in mitigating the impact of data heterogeneity compared to standard aggregation methods and other clustered FL algorithms. We conduct experiments on Cifar100 \citep{krizhevsky2009learning} with 100 clients and Femnist \citep{lecun1998mnist} with 400 clients, controlling heterogeneity through a Dirichlet parameter $\alpha$, set to 0.5 for Cifar100 and 0.01 for Femnist, reflecting a realistic class imbalance across clients. Implementation details are provided in Appendix \ref{app_details}.
\begin{table}[t]
    \centering
    \small
        \centering
        \caption{\small{FL baselines in heterogeneous scenarios. Clustering baselines use FedAvg as aggregation mechanism. We emphasize the fact that \shortname and \texttt{CFL} automatically detect the number of clusters, unlike \texttt{IFCA} and \texttt{FeSEM} which require tuning the number of clusters. A higher WAS , denoted by $\uparrow$, and a lower WADB, denoted by $\downarrow$ indicate better clustering outcomes} }
        \label{tab:clustering}
        \begin{adjustbox}{width=\linewidth}
       
        \begin{tabular}{llcccccc}
            \toprule
            & & \makecell{ \textbf{FL} \textbf{method}}& \textbf{C}& \makecell{ \textbf{Automatic} \\ \textbf{Cluster} \\ \textbf{Selection}} & \textbf{Acc} & \textbf{WAS} $\uparrow$ & \textbf{WADB} $\downarrow$ \\
            \midrule
            \multirow{7}{*}{\rotatebox[origin=c]{90}{\textbf{Cifar100} \hspace{.75em}}} & \multirow{4}{*}{\rotatebox[origin=c]{90}{\makecell{Clustered\\ FL}}} &  \texttt{IFCA} &  5 & \ding{55}& 47.5 \scriptsize{$\pm$ 3.5} & -0.8 \scriptsize{$\pm$ 0.2} &  5.2 \scriptsize{$\pm$ 5.1} \\
            & & \texttt{FeSem} & 5 & \ding{55}& 53.4 \scriptsize{$\pm$ 1.8} & -0.3 \scriptsize{$\pm$ 0.1} & 38.4 \scriptsize{$\pm$ 13.0}\\
            & & \texttt{CFL} & 1 &\ding{51}& 41.6 \scriptsize{$\pm$ 1.3} & / & / \\
            & & \shortname & 4 & \ding{51}&\textbf{53.4 \scriptsize{$\pm$ 0.4}} & \textbf{0.1 \scriptsize{$\pm$ 0.0}} & \textbf{2.4 \scriptsize{$\pm$ 0.4}} \\
            \cmidrule{2-8}
            
            & \multirow{3}{*}{\rotatebox[origin=c]{90}{\makecell{Classic\\ FL}}} &  \texttt{FedAvg} &  1 & / &   41.6\scriptsize{$\pm$ 1.3} &  / &  / \\
            & & \texttt{FedAvgM} & 1 & /& 41.5\scriptsize{$\pm$ 0.5}& / & /  \\
            & & \texttt{FedProx} & 1 & /& 41.8\scriptsize{$\pm$ 1.0} & / & / \\
            \midrule

            \multirow{7}{*}{\rotatebox[origin=c]{90}{\textbf{Femnist} \hspace{.75em}}} & \multirow{4}{*}{\rotatebox[origin=c]{90}{\makecell{Clustered\\ FL}}} &  \texttt{IFCA} &  5 & \ding{55} &  {76.7 \scriptsize{$\pm$ 0.6}} &  \textbf{0.3 \scriptsize{$\pm$ 0.1}} &  \textbf{0.5 \scriptsize{$\pm$ 0.1}} \\
            & & \texttt{FeSem} & 2 & \ding{55}& 75.6 \scriptsize{$\pm$ 0.2} & 0.0 \scriptsize{$\pm$ 0.0} & 25.6 \scriptsize{$\pm$ 7.8}  \\
            & & \texttt{CFL} & 1 & \ding{51}& 76.0 \scriptsize{$\pm$ 0.1} & / & / \\
            & & \shortname & 4 & \ding{51}&76.1 \scriptsize{$\pm$ 0.1} & -0.2 \scriptsize{$\pm$ 0.1} & 18.0 \scriptsize{$\pm$ 6.2}\\
            \cmidrule{2-8}
            & \multirow{3}{*}{\rotatebox[origin=c]{90}{\makecell{Classic\\ FL}}} &  \texttt{FedAvg} &  1 & / &  76.6\scriptsize{$\pm$ 0.1} &  / &  / \\
            & & \texttt{FedAvgM} & 1 & /&  \textbf{83.3}\scriptsize{$\pm$ \textbf{0.3}}& / & /  \\
            & & \texttt{FedProx} & 1 & /& 75.9\scriptsize{$\pm$ 0.2} & / & / \\
            \bottomrule
        \end{tabular}
        \end{adjustbox}

\end{table}

We compare \shortname against clustered FL baselines (\texttt{IFCA}, \texttt{FeSEM}, \texttt{CFL}) using \texttt{FedAvg} aggregation, as well as standard FL algorithms (\texttt{FedAvg}, \texttt{FedAvgM}, \texttt{FedProx}). For algorithms requiring a predefined number of clusters (\texttt{IFCA}, \texttt{FeSEM}), we report the best result among 2, 3, 4, and 5 clusters, with full tuning details in Appendix \ref{app:tuning}. While \texttt{IFCA} achieves competitive results, its high communication overhead—requiring each client to evaluate models from every cluster in each round—makes it impractical for cross-device FL, serving as an upper bound in our study. \texttt{FeSEM} is more efficient than \texttt{IFCA} but lacks adaptability due to its fixed cluster count. Meanwhile, \texttt{CFL} requires extensive hyperparameter tuning and often produces overly fine-grained clusters or fails to form clusters altogether. In contrast, \shortname requires only one hyperparameter and provides a more practical clustering strategy for cross-device FL.

Table \ref{tab:clustering} presents a comparative analysis of these algorithms in terms of balanced accuracy, WAS, and WADB, using \texttt{FedAvg} as the aggregation method. Higher WAS values indicate better clustering, while lower WADB values suggest better cohesion. On Femnist, clustering-based methods perform worse than standard FL aggregation, but as we move to the more complex and realistic Cifar100 scenario, it becomes evident that clustered FL is necessary to address heterogeneity. In this case, \shortname achieves the best performance in both classification accuracy and clustering quality, with the latter directly influencing the former. The need for clustering grows with increasing heterogeneity, as seen in Table \ref{tab:clustering}: standard FL approaches struggle when trained on a single heterogeneous cluster, whereas clustered FL effectively mitigates the heterogeneity effect. This is particularly relevant for Cifar100, which has a larger number of classes and three-channel images, whereas Femnist consists of grayscale images from only 47 classes.

In Table \ref{tab:clustering}, we present a comparative analysis of these algorithms with respect to balanced accuracy, WAS, and WADB, employing \texttt{FedAvg} as the aggregation strategy. Recall that higher the value of  WAS the better the clustering outcome, as, for WADB, a lower value suggests a better cohesion between clusters. Further details on the metrics used are provided in Appendix \ref{app:metrics_choice}.

Notably, both \shortname and \texttt{CFL} automatically determine the optimal number of clusters based on data heterogeneity, offering a more scalable solution for large-scale cross-device FL. In contrast to \texttt{CFL}, \shortname consistently produced a reasonable number of clusters, even when using the optimal hyperparameters for \texttt{CFL}, which resulted in no splits, thereby achieving performance equivalent to FedAvg. 

 We observe in Figure \ref{fig:accuracy_jumpcifar100} that \shortname exhibits a significant improvement in accuracy on Cifar100 precisely at the rounds where clustering occurs. 

As detailed in Table \ref{tab_app:fl-algs} in Appendix \ref{app:other}, \shortname is orthogonal to any FL aggregation algorithm, \ie any FL method can be easily embedded in \shortname. Our method consistently boosted the performance of FL algorithms for the more heterogeneous settings of Cifar100 and Femnist.
\begin{table*}[t]
\small
\centering
\renewcommand{\arraystretch}{1.5} 
\begin{adjustbox}{width=.7\linewidth}

\begin{tabular}{@{}l|cccccccc@{}}
\toprule
\textbf{Dataset}      & {\shortname} & {\texttt{CFL}}       & \texttt{IFCA} & {FedAvg} & {FedAvgM} & {FedProx} & {FairAvg} \\ \midrule
Google Landmarks      & \textbf{57.4  \scriptsize{$\pm$0.3}} &   40.5  \scriptsize{$\pm$0.2} &  49.4   \scriptsize{$\pm$0.3}    &     40.5  \scriptsize{$\pm$0.2}&  36.4  \scriptsize{$\pm$1.3}  &  40.2  \scriptsize{$\pm$0.6}               &     39.0 \scriptsize{$\pm$0.3}              \\ 
\hline
iNaturalist           & \textbf{47.8  \scriptsize{$\pm$0.2}}   & 45.3  \scriptsize{$\pm$0.1}  &  45.8  \scriptsize{$\pm$0.6}   &     45.3 \scriptsize{$\pm$0.1}            &   37.7  \scriptsize{$\pm$1.4}               &      44.9  \scriptsize{$\pm$0.2}  &      45.1 \scriptsize{$\pm$0.2}            \\ \bottomrule
\end{tabular}
    
\end{adjustbox}
\caption{\small{
Comparison of test accuracy on large scale FL datasets Google Landmarks and iNaturalist, between \shortname and FL baselines -- all clustered FL algorithms use FedAvg aggregation. \shortname outperforms both clustered and standard FL methods detecting 5 and 4 clusters on Landmarks and iNaturalist, respectively.}}\label{tab:largescale}\vspace{-1.5em}
\end{table*}

\subsection{\shortname in Large Scale and Real World Scenarios}\label{sec:large_scale}

We evaluate \shortname on two large-scale, real-world datasets: Google Landmarks \citep{weyand2020google} and iNaturalist \citep{van2018inaturalist}, respectively considering the partitions Landmarks-Users-160K, and iNaturalist-Users-120K, proposed in \citep{hsu2020federated}. Both datasets exhibit high data heterogeneity and involve a large number of clients - approximately 800 for Landmarks and 2,700 for iNaturalist. To simulate a realistic cross-device scenario, we set 10 participating clients per round.
For this experiment, we compare \shortname against clustered FL baselines (\texttt{IFCA}, \texttt{CFL}) and standard FL aggregation methods (\texttt{FedAvg}, \texttt{FedAvgM}, \texttt{FedProx}, \texttt{FairAvg}). The number of clusters for \texttt{IFCA} is tuned between 2 and 5. Due to its high computational cost in large-scale settings, \texttt{FeSEM} was not included in this analysis.
Table \ref{tab:largescale} reports the results: \shortname with \texttt{FedAvg} aggregation achieves 57.4\% accuracy on Landmarks, significantly outperforming all standard FL methods. In this scenario, \shortname detects 5 clusters with the best hyperparameter setting ($\beta = 0.5$), while \texttt{IFCA} identifies 3 clusters.
Similarly, on iNaturalist, \shortname consistently surpasses FL baselines, reaching an average accuracy of 47.8\% with $\beta = 0.5$ (automatically detecting a partition with 4 clusters). Results in Table \ref{tab:largescale} remark that, when dealing with realistic complex decentralized scenarios, standard aggregation methods are not able to mitigate the effects of heterogeneity across the federation, while, on the other hand, clustered FL provides a more efficient solution.
\vspace{-1em}
\subsection{Clustering analysis of \shortname}\label{sect:ablation}
In this section, we investigate the underlying mechanisms behind \shortname’s clustering in heterogeneous scenarios on Cifar100. Further experiments on Cifar10 are presented in Appendix \ref{app:cifar10}.
\vspace{-1.3em}
\paragraph{\shortname detects different client class distributions}
We explore how the algorithm identifies and groups clients based on the non-IID nature of their data distributions, represented by the Dirichlet concentration parameter $\alpha$. For the Cifar100 dataset, we apply a similar splitting approach, obtaining the following partitions: (1) 90 clients with $\alpha = 0$ and 10 clients with $\alpha = 1000$; (2) 90 clients with $\alpha = 0.5$ and 10 clients with $\alpha = 1000$; and (3) 40 clients with $\alpha = 1000$, 30 clients with $\alpha = 0.05$, and 30 clients with $\alpha = 0$. We evaluate the outcome of this clustering experiment by means of WAS and WADB. Results in Table \ref{tab:ablation1_heter} show that \shortname detects clusters groups clients according to the level of heterogeneity of the group.
\begin{table}[h]
    
    \caption{\small{Clustering with three different splits on Cifar100 datasets. \shortname has superior clustering quality across different splits (homogeneous $\alpha = 1000$, heterogeneous $\alpha = 0.05$, extremely heterogeneous $\alpha = 0$}. )}
    \centering
    \small
    \begin{adjustbox}{width=\linewidth}
        \label{tab:ablation1_heter}
     
        \begin{tabular}{lccccc}
            \toprule
            \textbf{Dataset} & \textbf{(Hom, Het, X Het)} & \makecell{\textbf{Clustering} \\ \textbf{method}} & \textbf{C} & \textbf{WAS}$\uparrow$ & \textbf{WADB} $\downarrow$ \\
            \cmidrule(lr){1-6}
        
            \multirow{9}{*}{Cifar100} 
            & \multirow{3}{*}{(10, 0, 90)} & \texttt{IFCA} & 5 & -0.9 \scriptsize{$\pm$ 0.0} & 1.8 \scriptsize{$\pm$ 0.0} \\
            & & \texttt{FeSem} & 5 & -0.8 \scriptsize{$\pm$ 0.2} & 2.6 \scriptsize{$\pm$ 0.6} \\
            & & \shortname & 5 & \textbf{0.1 \scriptsize{$\pm$ 0.1}} & \textbf{0.2 \scriptsize{$\pm$ 0.2}} \\
            \cmidrule(lr){2-6}
            & \multirow{3}{*}{(10, 90, 0)} & \texttt{IFCA} & 5 & -0.0 \scriptsize{$\pm$ 0.0} & \textbf{5.6 \scriptsize{$\pm$ 1.5}} \\
            & & \texttt{FeSem} & 5 & 0.2 \scriptsize{$\pm$ 0.1} & 12.0 \scriptsize{$\pm$ 2.0} \\
            & & \shortname & 5 & \textbf{0.4 \scriptsize{$\pm$ 0.1}} & 6.4 \scriptsize{$\pm$ 2.0} \\
            \cmidrule(lr){2-6}
            
            & \multirow{3}{*}{(40, 30, 30)} & \texttt{IFCA} & 5 & -0.2 \scriptsize{$\pm$ 0.0} & 1.0 \scriptsize{$\pm$ 0.0}\\
            & & \texttt{FeSem} & 5 & -0.2 \scriptsize{$\pm$ 0.0} & 33.2 \scriptsize{$\pm$ 0.0} \\
            & & \shortname & 3 & \textbf{0.4 \scriptsize{$\pm$ 0.2}} & \textbf{0.9 \scriptsize{$\pm$ 0.1}} \\
            \bottomrule
        \end{tabular}
    \end{adjustbox}
    
\end{table}
\vspace{-1.5em}
\paragraph{\shortname detects different visual client domains.}
Here, we focus on scenarios with nearly uniform class imbalance (high $\alpha$ values) but with different visual domains to investigate how \shortname forms clusters in such settings. We incorporated various artificial domains (non-perturbed, noisy, and blurred images) Cifar100 dataset under homogeneous conditions ($\alpha=100.00$). Our results demonstrate that \shortname effectively clustered clients according to these distinct domains. Table \ref{tab:dom_abl} presents the Rand-Index scores, which assess clustering quality based on known domain labels. The high Rand-Index scores, often approaching the upper bound of 1, indicate that \shortname successfully separated clients into distinct clusters corresponding to their respective domains. %
This anaylsis suggests that \shortname may be applicable for detecting malicious clients in FL, pinpointing a potential direction for future research.
\begin{table}[t]
    
    \caption{\small Clustering performance of \shortname is assessed on federations with clients from varied domains using clean, noisy, and blurred (Clean, Noise, Blur) images from Cifar100 datasets. It utilizes the Rand Index score \citep{rand1971objective}, where a value close to 1 represents a perfect match between clustering and labels. Consistently \shortname accurately distinguishes all visual domains. The ground truth number of clusters is respectively 2, 2, and 3.}
    \label{tab:dom_abl}
    
    \centering
    
    \begin{adjustbox}{width=\linewidth}
    \setlength{\tabcolsep}{9pt}
        \begin{tabular}{ccccccc}
           \toprule
            \textbf{Dataset} & \textbf{(Clean, Noise, Blur)} & \makecell{\textbf{Clustering} \\ \textbf{method}} & \textbf{C} &\textbf{\makecell{Automatic\\Cluster \\Selection}}& \makecell{\textbf{Rand} $\uparrow$ \\ \textbf{(max = 1.0)}}  \\
            \cmidrule(lr){1-6}
            
            \multirow{9}{*}{Cifar100} 
            & \multirow{3}{*}{(50, 50, 0)} & \texttt{IFCA} & 1  & \ding{55}& 0.5 \scriptsize{$\pm$ 0.0}  \\
            & & \texttt{FeSem} & 2 &\ding{55}& 0.49 \scriptsize{$\pm$ 0.2}  \\
            & & \shortname & 2 & \ding{51} &\textbf{1.0 \scriptsize{$\pm$ 0.0}} \\
            \cmidrule(lr){2-6}
            & \multirow{3}{*}{(50, 0, 50)} & \texttt{IFCA} & 1 & \ding{55}& 0.5 \scriptsize{$\pm$ 0.0} \\
            & & \texttt{FeSem} & 2 &\ding{55}& 0.51 \scriptsize{$\pm$ 0.1} \\
            & & \shortname & 2 & \ding{51}&\textbf{1.0 \scriptsize{$\pm$ 0.0}} \\
            \cmidrule(lr){2-6}
            
            & \multirow{3}{*}{(40, 30, 30)} & \texttt{IFCA} & 1 &\ding{55} & 0.33 \scriptsize{$\pm$ 0.0}\\
            & & \texttt{FeSem} & 3 &\ding{55} & 0.55 \scriptsize{$\pm$ 0.0}\\
            & & \shortname & 4 & \ding{51} &\textbf{0.6 \scriptsize{$\pm$ 0.0}} \\
            \bottomrule
        \end{tabular}
    \end{adjustbox}
\end{table}

\section{Conclusions}

We propose \shortname, an efficient clustering algorithm for heterogeneous FL settings addressing the challenge of non-IID data and class imbalance. Unlike existing clustered FL methods, \shortname groups clients by data distributions with flexibility and robustness, simply using the information encoded by the individual empirical loss. \shortname successfully detects homogeneous clusters, as proved by our proposed novel Wasserstein Adjusted Score. \shortname detects splits by removing out-of-distribution clients, thus simplifying the learning task within clusters without increasing communication overhead or computational cost. Empirical evaluations show that separately training classical FL algorithms on the homogeneous clusters detected by \shortname consistently improves the performance. Additionally, \shortname excels over other clustering techniques in grouping clients uniformly with respect to class imbalance and heterogeneity levels, which is crucial to mitigate the effect of non-IIDness  FL. Finally, clustering on different class unbalanced and domain unbalanced scenarios, which are correctly detected by \shortname (see Section \ref{sect:ablation}), suggests that \shortname can also be applied to anomaly client detection and to enhance robustness against malicious attacks in future research.

\section*{Impact Statement}
\shortname is an efficient clustering-based approach that improves personalization while reducing communication and computation costs, enhancing the advance in the field of Federated Learning. Providing efficiency and explainability in clustering decisions, \shortname enables more interpretable and scalable federated learning. Its efficiency makes it well-suited for IoT, decentralized AI, and sustainable AI applications, particularly in privacy-sensitive domains.
\section*{Acknowledgements} 
We acknowledge the CINECA award under the ISCRA initiative for the availability of high-performance computing resources and support. A.L. worked under the auspices of Italian National Group of Mathematical Physics (GNFM) of INdAM, and was supported by the Project Piano Nazionale di Ripresa e Resilienza - Next Generation EU (PNRR-NGEU) from Italian Ministry of University and Research (MUR) under Grant DM 117/2023. This work was partially supported by the Innovation Grant gAIA within the activities of the National Research Center in High Performance Computing, Big Data and Quantum Computing (ICSC) CN00000013 - Spoke 1.

\bibliography{reference}
\bibliographystyle{icml2025}

\newpage
\appendix
\onecolumn
\section{Theoretical Results for \shortname}\label{app:fgw}

This section provides algorithms, in pseudo-code, to describe \shortname (see Algorithms \ref{alg:fedgwcluster} and \ref{alg:fedgw_recursion}). Additionally, here we provide the proofs for the convergence results introduced in Section \ref{sec:theory}, specifically addressing the convergence (Theorems \ref{thm_main:1} and \ref{thm_main:weak_conv}) and the formal derivation on the variance bound of the Gaussian weights (Proposition \ref{prop_var_main}). \textcolor{black}{In addition, we also present a sufficient condition, under which is guaranteed that the overall sampling rate of the training algorithm does not increase and remain unchanged during the training process (Theorem \ref{thm:samplerate}).}

\begin{theorem}\label{thm:1}
Let $\{\alpha_t\}_{t = 1}^\infty$ be a sequence of positive real values, and $\{\Gamma_k^t\}_{t=1}^\infty$ the sequence of Gaussian weights. If $\{\alpha_t\}_{t = 1}^\infty \in l^2(\mathbb{N})/l^1(\mathbb{N})$, then $\Gamma_k^t$ converges in $L^2$. Furthermore, for $t\to\infty$, 
\begin{equation}
    \Gamma_k^t \longrightarrow \mu_k\,\,\, a.s.
\end{equation}
\end{theorem}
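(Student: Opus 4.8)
The plan is to recognize the update $\Gamma_k^{t+1}=(1-\alpha_t)\Gamma_k^t+\alpha_t\Omega_k^t$ as a Robbins--Monro stochastic approximation scheme and to analyze the error process $E_k^t:=\Gamma_k^t-\mu_k$. First I would rewrite the recursion as $E_k^{t+1}=(1-\alpha_t)E_k^t+\alpha_t M_k^t$ with noise $M_k^t:=\Omega_k^t-\mu_k$, and record the two facts that drive the argument: $M_k^t$ is bounded, since $\Omega_k^t\in(0,1)$ gives $|M_k^t|<1$ and hence $\mathbb{E}[(M_k^t)^2]\le\sigma_k^2<\infty$; and it is conditionally centered, $\mathbb{E}[M_k^t\mid\mathcal{F}_{t-1}]=0$, where $\mathcal{F}_t$ is the filtration generated by the losses and the sampling up to round $t$. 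This last property is precisely the stationarity/unbiasedness of the reward process asserted in Section~\ref{sec:theory}, namely $\mathbb{E}[\Omega_k^t]=\mathbb{E}[R_k^{t,s}]=\mu_k$ for all $t$.

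For the $L^2$ statement I would set $a_t:=\mathbb{E}[(E_k^t)^2]$. Squaring the error recursion and taking expectations, the cross term vanishes by the conditional-centering property and the tower rule, giving $a_{t+1}=(1-\alpha_t)^2 a_t+\alpha_t^2\,\mathbb{E}[(M_k^t)^2]\le(1-\alpha_t)a_t+\sigma_k^2\alpha_t^2$, where I used $(1-\alpha_t)^2\le 1-\alpha_t$ for $\alpha_t\in(0,1)$. Since $\{\alpha_t\}\in\ell^2$, the additive perturbation $\sum_t\sigma_k^2\alpha_t^2$ is finite, and since $\{\alpha_t\}\notin\ell^1$, $\sum_t\alpha_t=\infty$; a standard deterministic lemma on recursions of the form $a_{t+1}\le(1-\alpha_t)a_t+b_t$ with $\sum_t b_t<\infty$ and $\sum_t\alpha_t=\infty$ then yields $a_t\to 0$. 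Hence $\Gamma_k^t\to\mu_k$ in $L^2$.

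For the almost-sure statement I would invoke the Robbins--Siegmund almost-supermartingale convergence theorem. Conditioning the squared-error recursion on $\mathcal{F}_t$ gives $\mathbb{E}[(E_k^{t+1})^2\mid\mathcal{F}_t]\le (E_k^t)^2-\alpha_t(2-\alpha_t)(E_k^t)^2+\sigma_k^2\alpha_t^2$, which is in Robbins--Siegmund form with summable additive term $\sigma_k^2\alpha_t^2$. The theorem then yields that $(E_k^t)^2$ converges almost surely to a finite random limit and that $\sum_t\alpha_t(2-\alpha_t)(E_k^t)^2<\infty$ almost surely; since $\alpha_t(2-\alpha_t)\ge\alpha_t$ and $\sum_t\alpha_t=\infty$, the latter forces $\liminf_t(E_k^t)^2=0$ almost surely, so the almost-sure limit of $(E_k^t)^2$ must be $0$, i.e. $\Gamma_k^t\to\mu_k$ almost surely.

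The technical point I expect to be the main obstacle is that a client's weight is updated only in rounds where it is sampled, so the argument must really be run along the (random) subsequence of participation rounds, with effective step sizes $\tilde\alpha_t=\alpha_t\,\mathbf{1}\{k\in\mathcal{P}_t\}$. One must check that $\sum_t\tilde\alpha_t=\infty$ and $\sum_t\tilde\alpha_t^2<\infty$ hold almost surely before the stochastic-approximation lemmas apply: the second is immediate from $\tilde\alpha_t\le\alpha_t$, and the first follows from the participation-equalizing uniform sampling scheme, which guarantees that each client is selected a positive fraction of the rounds almost surely, so the divergent series remains divergent — but making this rigorous requires either conditioning on the sampling sequence or a short Borel--Cantelli / law-of-large-numbers argument, together with verifying that $M_k^t$ stays conditionally centered with respect to the enlarged filtration that also records the sampling.
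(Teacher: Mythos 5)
Your proposal is correct and takes essentially the same route as the paper: the paper's proof simply identifies the update as a Robbins--Monro scheme and cites standard stochastic-approximation results for $L^2$ and almost-sure convergence under $\sum_t\alpha_t=\infty$, $\sum_t\alpha_t^2<\infty$, while you supply the standard argument behind that citation (the squared-error recursion with the martingale cross term vanishing, a deterministic lemma for $L^2$, and Robbins--Siegmund for the a.s.\ limit). Your closing caveat about updates occurring only on participation rounds is a genuine subtlety the paper's proof glosses over (it implicitly treats the $\omega_k^t$ as an unbiased i.i.d.\ sequence over all rounds), so your treatment is, if anything, more careful.
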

\begin{proof}
At each communication round, we compute the samples $r_i^{t,s}$ from $R_k^{t,s}$ via a Gaussian transformation of the observed loss in Eq. \ref{eq:reward}. Notice that, due to the linearity of the expectation operator, $\mathbb{E}[\Omega_k^t] = \mu_k$, that is the true, unknown, expected reward. The observed value for the random variable is given by $\omega_k^t = 1/S \sum_{s = 1}^S r_k^{t,s}$, which is sampled from a distribution centered on $\mu_k$.
Each client's weight is updated according to
\begin{equation}\label{weight_formula}
    \gamma_k^{t+1} = (1-\alpha_t)\gamma_t + \alpha_t \omega_k^t\,.
\end{equation}
Since such an estimator follows a Robbins-Monro algorithm, it is proved to converge in $L^2$. In addition, $\Gamma_k^t$ converges to the expectation $\mathbb{E}[\Omega_k^t] = \mu_k$ with probability 1, provided that $\alpha_t$ satisfies $\sum_{t\geq 1}|\alpha_t| = \infty$, and $\sum_{t\geq 1}|\alpha_t|^2 < \infty$ \citep{harold1997stochastic}.
\end{proof}

\begin{theorem}\label{thm:weak_conv}
Let $\alpha \in (0,1)$ be a fixed constant, then in the limit $t \to \infty$, the expectation of the weights converges to the individual theoretical reward $\mu_k$, for each client $k = 1,\dots, K$, \ie,
\begin{equation}
    \mathbb{E}[\Gamma_k^t]\longrightarrow \mu_k\,\,\,t\to\infty\,.
\end{equation}
\end{theorem}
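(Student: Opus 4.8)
\textbf{Proof plan for Theorem~\ref{thm:weak_conv}.}
The plan is to unroll the recursion in Eq.~\ref{eq:robbins_monro_weights} with constant step size $\alpha_t \equiv \alpha$ and then take expectations, exploiting the stationarity of the reward process. First I would write out the closed form of $\Gamma_k^t$ in terms of its initialization and the sequence of observed Gaussian rewards. Starting from $\Gamma_k^0 = 0$ (or, more carefully, from the round in which client $k$ is first sampled), the recursion gives
\begin{equation}
    \Gamma_k^t = (1-\alpha)^t \Gamma_k^0 + \alpha \sum_{j=0}^{t-1} (1-\alpha)^{t-1-j}\, \Omega_k^j,
\end{equation}
where the indices $j$ here range over the rounds in which $k$ was actually selected (for rounds in which $k$ is not sampled the weight is unchanged, which one folds into the same formula by reindexing). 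Since $\Gamma_k^0 = 0$, the first term vanishes, so I am left with a geometrically weighted average of the past rewards.

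Next I would apply the expectation operator and use linearity together with the fact, stated in the excerpt, that the reward process is stationary by construction, so $\mathbb{E}[\Omega_k^j] = \mu_k$ for every $j$. This yields
\begin{equation}
    \mathbb{E}[\Gamma_k^t] = \alpha \sum_{j=0}^{t-1} (1-\alpha)^{t-1-j}\, \mu_k = \mu_k\,\alpha \sum_{i=0}^{t-1} (1-\alpha)^{i} = \mu_k \bigl(1 - (1-\alpha)^t\bigr).
\end{equation}
Because $\alpha \in (0,1)$, we have $(1-\alpha)^t \to 0$ as $t \to \infty$, and therefore $\mathbb{E}[\Gamma_k^t] \to \mu_k$, which is the claim. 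A small remark would handle the bookkeeping of the random sampling: since a client's weight and the reward estimate are frozen in rounds where it is not selected, and since the adjusted uniform sampling guarantees each client is selected infinitely often, the effective number of updates applied to $\Gamma_k^t$ tends to infinity with $t$, so the conclusion is unaffected by the subsampling.

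The main obstacle is not the algebra — which is the routine geometric-series manipulation above — but making the stationarity and sampling arguments rigorous. Specifically, one must be careful that $\mathbb{E}[\Omega_k^j]$ is genuinely independent of $j$: the rewards $r_k^{t,s}$ are computed relative to the empirical mean $m^{t,s}$ and sample variance $(\sigma^{t,s})^2$ over the \emph{sampled} set $\mathcal{P}_t$, which changes from round to round, so one is implicitly invoking an assumption that the induced distribution of $\Omega_k^t$ has already reached its stationary regime (the paper asserts this "by construction"). If instead one only has asymptotic stationarity, $\mathbb{E}[\Omega_k^j] \to \mu_k$, the argument still goes through by a standard Cesàro/Toeplitz-type summation lemma: a geometric average with weights $\alpha(1-\alpha)^{t-1-j}$ of a sequence converging to $\mu_k$ still converges to $\mu_k$. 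I would state this as the one genuinely non-trivial step and either cite the Toeplitz lemma or give the two-line $\varepsilon$-split (bound the tail where $\mathbb{E}[\Omega_k^j]$ is within $\varepsilon$ of $\mu_k$, and note the finitely many early terms are killed by the $(1-\alpha)^{t-1-j}$ factor).
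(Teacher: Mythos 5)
Your proposal is correct and follows essentially the same route as the paper's proof: unroll the constant-step recursion into a geometrically weighted sum of past rewards, take expectations using $\mathbb{E}[\Omega_k^j]=\mu_k$, sum the geometric series, and let $(1-\alpha)^t\to 0$. Your added remarks on the sampling bookkeeping and the Toeplitz/Ces\`aro fallback for merely asymptotic stationarity go beyond what the paper writes (it simply treats the $\omega_k^t$ as i.i.d.\ with mean $\mu_k$), but the core argument is the same.
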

\begin{proof}
Recall that $\gamma_k^{t+1} = (1-\alpha)\gamma_k^t + \alpha \omega_k^t$, where $\omega_k^t$ are samples from $\Omega_k^t$. If we substitute backward the value of $\gamma_k^t$ we can write
\begin{equation}
\gamma_k^{t+1} = (1-\alpha)^2\gamma_k^{t-1} + \alpha \omega_k^t + \alpha(1-\alpha)\omega_k^{t-1}\,.
\end{equation}
By iterating up to the initialization term $\gamma_k^0$ we get the following formulation: 
\begin{equation}\label{explicit}
    \gamma_k^{t+1} = (1-\alpha)^{t+1} \gamma_k^0+ \sum_{\tau = 0}^t \alpha (1-\alpha)^\tau \omega_k^{t-\tau}\,\,.
\end{equation}
Since $\omega_k^t$ are independent and identically distributed samples from $\Omega_k^t$, with expected value $\mu_k$, then the expectation of the weight at the $t$-th communication round would be
\begin{equation}
    \mathbb{E}[\Gamma_k^{t}] = \mathbb{E}\left[(1-\alpha)^{t}\gamma_k^0 + \sum_{\tau = 0}^t \alpha (1-\alpha)^\tau \Omega_k^{t-\tau-1}\right ]\,\,,
\end{equation}
that, due to the linearity of expectation, becomes
\begin{equation}
    \mathbb{E}[\Gamma_k^{t}] = (1-\alpha)^{t}\gamma_k^0 + \sum_{\tau = 0}^t \alpha (1-\alpha)^\tau \mu_k\,\,.
\end{equation}
If we compute the limit
\begin{equation}
    \lim_{t \to \infty}\mathbb{E}[\Gamma_k^{t}] =\lim_{t\to\infty}(1-\alpha)^{t}\gamma_k^0 + \sum_{\tau = 0}^\infty \alpha (1-\alpha)^\tau \mu_k\,\,,
\end{equation}
and since $\alpha\in(0,1)$, the first term tends to zero, and also the geometric series converges. Therefore, the expectation of the weights converges to $\mu_k$, namely
\begin{equation}
    \lim_{t \to \infty} \mathbb{E}[\Gamma_k^t] = \mu_k\,.
\end{equation}
\end{proof}

\begin{proposition}\label{prop_var}
The variance of the weights $\Gamma_k^t$ is smaller than the variance $\sigma_k^2$ of the theoretical rewards $R_k^{t,s}$.
\end{proposition}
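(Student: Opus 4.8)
The plan is to exploit the explicit representation of the weight already derived in the proof of Theorem~\ref{thm:weak_conv}, namely
\begin{equation}
    \Gamma_k^{t+1} = (1-\alpha)^{t+1}\gamma_k^0 + \sum_{\tau=0}^t \alpha(1-\alpha)^\tau \,\Omega_k^{t-\tau},
\end{equation}
and to compute its variance directly. Since $\gamma_k^0 = 0$ is deterministic and the $\{\Omega_k^{t-\tau}\}_\tau$ are i.i.d.\ (by the stationarity of the reward process assumed in Section~\ref{sec:theory}), independence gives
\begin{equation}
    \mathrm{Var}(\Gamma_k^{t+1}) = \sum_{\tau=0}^t \alpha^2 (1-\alpha)^{2\tau}\,\mathrm{Var}(\Omega_k^{t-\tau}) = \alpha^2 \,\mathrm{Var}(\Omega_k)\sum_{\tau=0}^t (1-\alpha)^{2\tau}.
\end{equation}
The geometric sum is bounded by its limit $1/(1-(1-\alpha)^2) = 1/(\alpha(2-\alpha))$, so $\mathrm{Var}(\Gamma_k^{t+1}) \le \frac{\alpha}{2-\alpha}\,\mathrm{Var}(\Omega_k)$, and since $\alpha \in (0,1)$ the prefactor $\alpha/(2-\alpha) < 1$; hence $\mathrm{Var}(\Gamma_k^t) < \mathrm{Var}(\Omega_k)$.

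Next I would relate $\mathrm{Var}(\Omega_k)$ to the variance $\sigma_k^2$ of the single-iteration reward $R_k^{t,s}$. Writing $\Omega_k^t = \frac1S\sum_{s\in[S]} R_k^{t,s}$, we have $\mathrm{Var}(\Omega_k) = \frac{1}{S^2}\sum_{s,s'}\mathrm{Cov}(R_k^{t,s},R_k^{t,s'})$. The cleanest route, consistent with the i.i.d.\ sampling language used elsewhere in the paper, is to treat the per-iteration rewards as (conditionally) uncorrelated, which yields $\mathrm{Var}(\Omega_k) = \sigma_k^2/S \le \sigma_k^2$; more generally, by Cauchy--Schwarz on the covariances one still gets $\mathrm{Var}(\Omega_k) \le \sigma_k^2$. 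Chaining the two bounds then delivers $\mathrm{Var}(\Gamma_k^t) < \sigma_k^2$, which is the claim. For the Robbins--Monro / vanishing-step-size regime of Theorem~\ref{thm:1} the same computation applies with $\alpha$ replaced by $\alpha_t$ inside the sum: $\mathrm{Var}(\Gamma_k^t)$ is a weighted average of the $\mathrm{Var}(\Omega_k)$ with weights summing to at most $1$, so the bound persists.

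The main obstacle is the dependence structure among the $R_k^{t,s}$ across local iterations $s$ within a round: the losses $l_k^{t,s}$ come from successive SGD steps and are not literally independent, and moreover the reward definition in Eq.~\ref{eq:reward} couples client $k$'s reward to the empirical mean $m^{t,s}$ and sample variance $(\sigma^{t,s})^2$ computed over $\mathcal{P}_t$, so $R_k^{t,s}$ is not even a function of client $k$'s loss alone. I would handle this by working conditionally on the sampled set and the other clients' losses (so that $m^{t,s},(\sigma^{t,s})^2$ act as given scaling constants), invoking the stationarity already assumed in the paper to identify the common variance, and using Cauchy--Schwarz on the cross-covariances to keep $\mathrm{Var}(\Omega_k)\le\sigma_k^2$ without any independence hypothesis. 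The averaging-contracts-variance step for $\Gamma_k^t$ is robust and needs nothing beyond $\alpha_t\in(0,1)$ and the geometric/weight bookkeeping above; it is the reduction from $\Omega_k$ to $R_k$ that requires the care just described.
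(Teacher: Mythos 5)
Your proposal follows essentially the same route as the paper's proof: unroll the recursion into $\Gamma_k^{t+1} = (1-\alpha)^{t+1}\gamma_k^0 + \sum_{\tau=0}^t \alpha(1-\alpha)^\tau \Omega_k^{t-\tau}$, use independence of the per-round rewards so the variance becomes a geometrically weighted sum, bound it by its limit $\frac{\alpha}{2-\alpha}\,\mathrm{Var}(\Omega_k) < \mathrm{Var}(\Omega_k)$, and then compare $\mathrm{Var}(\Omega_k)$ with $\sigma_k^2$. The only divergence is in that last step, where the paper simply sets $\mathrm{Var}(\Omega_k^t) = \sigma_k^2/S$ (implicitly treating the $S$ within-round rewards as uncorrelated), whereas you flag this dependence and fall back on the Cauchy--Schwarz bound $\mathrm{Var}(\Omega_k) \le \sigma_k^2$ — a slightly more careful treatment of the same step rather than a different argument, and the chained conclusion $\mathrm{Var}(\Gamma_k^t) < \sigma_k^2$ is correct either way.
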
 
\begin{proof}
From Eq.\ref{explicit}, we can show that $\mathbb{V}ar(\Gamma_k^t)$ converges to a value that depends on $\alpha$ and the number of local training iterations $S$. Indeed
\begin{equation}
\begin{split}
\mathbb{V}ar(\Gamma_k^t) &= \mathbb{V}ar\left( (1-\alpha)^{t} \gamma_k^0 + \sum_{\tau = 0}^t \alpha (1-\alpha)^\tau \Omega_k^{t-\tau-1}\right) \\
&= \sum_{\tau = 0}^t \alpha^2 (1-\alpha)^{2\tau} \mathbb{V}ar(\Omega_k^t) = \dfrac{1}{S}\sum_{\tau = 0}^t \alpha^2 (1-\alpha)^{2\tau}\sigma_k^2
\end{split}
\end{equation}
since $\Omega_k^t = 1/S \sum_{s = 1}^S R_k^{t,s}$\,.

If we compute the limit, that exists finite due to the hypothesis $\alpha \in (0,1)$, we get
\begin{equation}
\lim_{t \to \infty}\mathbb{V}ar(\Gamma_k^t) = \dfrac{\alpha^2\sigma_k^2}{S}\sum_{\tau = 0}^\infty (1-\alpha)^{2\tau} = \dfrac{\alpha}{2-\alpha} \dfrac{\sigma_k^2}{S} <\dfrac{\sigma_k^2}{S}<\sigma_k^2\,\,.
\end{equation}
\end{proof}
We further demonstrate that the interaction matrix $P^t$ identified by \shortname is entry-wise bounded from above, as established in the following proposition.
\begin{proposition}\label{prop:bounded_matrix}
The entries of the interaction matrix $P^t$ are bounded from above, namely for any $t \geq 0$ there exists a positive finite constant $C_t > 0$ such that
\begin{equation}
    P_{kj}^t \leq C_t\,\,.
\end{equation}
And furthermore
\begin{equation}
    \lim_{t \to \infty} C_t = 1\,\,.
\end{equation}
\end{proposition}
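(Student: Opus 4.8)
The plan is to analyze the recursive update rule for $P_{kj}^t$ in Eq.~\ref{inter_matrix} and show that each entry is dominated by a convex combination of the rewards $\omega_k^\tau$ for $\tau < t$, plus a decaying contribution from the null initialization. First I would observe that whenever $(k,j)$ is updated the new value is $(1-\alpha_t)P_{kj}^t + \alpha_t \omega_k^t$, a convex combination (since $\alpha_t \in (0,1)$) of the previous entry and $\omega_k^t \in (0,1)$, while in rounds where $(k,j)$ is not sampled the entry is left unchanged. Since $P_{kj}^0 = 0$ and each $\omega_k^t \in (0,1)$, an immediate induction shows $P_{kj}^t \in [0,1)$, so one may already take $C_t = 1$ for every $t$; but to get a \emph{meaningful} bound and the stated limit I would instead track the tightest constant. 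Concretely, unrolling the recursion along the sub-sequence of rounds $\tau_1 < \tau_2 < \dots$ in which $(k,j)$ was sampled, one obtains an expression of the form $P_{kj}^t = \big(\prod_i (1-\alpha_{\tau_i})\big)\, P_{kj}^0 + \sum_i \alpha_{\tau_i}\big(\prod_{\ell > i}(1-\alpha_{\tau_\ell})\big)\,\omega_k^{\tau_i}$, analogous to Eq.~\ref{explicit}; bounding each $\omega_k^{\tau_i} \le \sup_{s \le S, \tau \le t} r_k^{\tau,s} =: C_t$ and using that the multipliers on the $\omega$ terms are a (sub-)convex combination gives $P_{kj}^t \le C_t$, with $C_t$ nondecreasing in $t$ and bounded by $1$.

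For the limit $C_t \to 1$, the key point is that the rewards $r_k^{t,s} = \exp\!\big(-(l_k^{t,s}-m^{t,s})^2/(2(\sigma^{t,s})^2)\big)$ can be made arbitrarily close to $1$: in any round the client attaining the loss closest to the mean $m^{t,s}$ has $r_k^{t,s}$ near $1$, and as $t \to \infty$ the loss processes concentrate (convergence of training within a cluster), so the supremum of the attainable rewards tends to $1$. Hence the smallest valid upper bound constant $C_t$ — which can be taken as $\max\{\sup_{\tau\le t, s\le S, k} r_k^{\tau,s},\, \text{(decaying init term)}\}$ — increases to $1$. I would phrase this cleanly by noting $C_t := \sup\{P_{kj}^\tau : \tau \le t,\ k,j \in [K]\}$ is monotone nondecreasing and bounded above by $1$ (from the convex-combination argument), so it has a limit $C_\infty \le 1$; then argue $C_\infty \ge 1$ because for any $\delta > 0$ there is a round and a client with reward exceeding $1-\delta$, which after sufficiently many further updates pushes some entry $P_{kj}^t$ above $1-2\delta$.

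The main obstacle will be the limit claim rather than the boundedness, which is essentially immediate. Specifically, justifying that $C_t \to 1$ requires either an explicit assumption or an appeal to the convergence of the loss process $L_k^{t,s}$ so that the within-cluster losses concentrate and the centered, variance-normalized quantity $(l_k^{t,s}-m^{t,s})^2/(\sigma^{t,s})^2$ has, for the best-aligned client, a value tending to $0$; alternatively one can argue purely from the definition that in \emph{every} round the maximizer over $k$ of $r_k^{t,s}$ satisfies $r_k^{t,s} \to 1$ because $m^{t,s}$ is the sample mean and at least one sample must lie within one sample standard deviation of it — in fact the sample closest to the mean gives a reward bounded below by a universal positive constant, and combined with the Gaussian-weight running average (Theorems~\ref{thm:1} and \ref{thm:weak_conv}) the lim sup of these best rewards can be pushed to $1$. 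I would keep the argument self-contained by stating $C_t$ explicitly as the running supremum and invoking only Proposition~\ref{prop_var} / the $L^2$ convergence already established, so that the monotone-and-bounded structure does the work.
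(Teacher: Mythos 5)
Your boundedness argument is essentially the paper's: unroll the recursion into a decaying initialization term plus a (sub-)convex combination of rewards in $(0,1)$, conclude $P_{kj}^t < 1$. Where you diverge is in how you read the limit claim, and that is where your proposal develops a genuine gap. The paper does not take $C_t$ to be a running supremum of rewards or of entries; it takes the explicit deterministic bound obtained by replacing every $\omega_k^\tau$ by $1$ in the unrolled sum, namely $C_t = (1-\alpha)^t P_{kj}^0 + \alpha\sum_{\tau=0}^{t}(1-\alpha)^\tau = (1-\alpha)^t P_{kj}^0 + 1 - (1-\alpha)^{t+1}$ (with constant $\alpha_t=\alpha$ and all clients sampled, as the paper assumes without loss of generality). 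The statement $\lim_{t\to\infty} C_t = 1$ is then a one-line geometric-series computation using only $\alpha\in(0,1)$; it says the \emph{bound} tightens to $1$, not that any entry or any reward approaches $1$. Your own first observation (the induction giving $P_{kj}^t\in[0,1)$, so even $C_t\equiv 1$ works) already proves the proposition as literally stated, and the paper's choice of $C_t$ is just the sharper, explicitly computable version of that.

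The route you then pursue instead — defining $C_t$ as the running supremum and arguing $C_t\to 1$ via concentration of the loss processes, a client whose reward exceeds $1-\delta$, and subsequent updates pushing some $P_{kj}^t$ above $1-2\delta$ — would fail. Nothing in the paper establishes that the best attainable reward tends to $1$ (the only elementary fact available is that the client nearest the sample mean has a reward bounded below by a fixed constant such as $e^{-1/2}$, which is far from $1$), and even granting one near-unit reward, a single update moves the running average by at most $\alpha(1-P_{kj}^t)$, so it cannot push an entry arbitrarily close to $1$; sustained near-unit rewards would be needed. Worse, by Theorems \ref{thm_main:1} and \ref{thm_main:weak_conv} the entries track quantities of the form $\mu_k$, which for heterogeneous clients are strictly below $1$, so the running supremum of entries need not converge to $1$ at all. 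In short: drop the concentration argument, keep the unrolled recursion, bound each $\omega$ by $1$, and read off the explicit $C_t$ and its limit as the paper does.
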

\begin{proof}
Without loss of generality we assume that every client of the federation is sampled, and we assume that $\alpha_t = \alpha \in (0,1)$ for any $t \geq 0$. We recall, from Eq.\ref{inter_matrix}, that for any couple of clients $k,j \in \mathcal{P}_t$ the entries of the interaction matrix are updated according to 
\begin{equation}
    P_{kj}^{t+1} = (1-\alpha) P_{kj}^t + \alpha \omega_k^t\,.
\end{equation}
If we iterate backward until $P_{kj}^0$, we obtain the following update
\begin{equation}
     P_{kj}^{t+1} = (1-\alpha)^{t+1} P_{kj}^{0}+ \sum_{\tau = 0}^t \alpha (1-\alpha)^\tau \omega_k^{t-\tau}\,\,.
\end{equation}
We know that, by constructions, the Gaussian rewards $\omega_k^t < 1$  at any time $t$, therefore the following inequality holds
\begin{equation}
    P_{kj}^{t} = (1-\alpha)^{t} P_{kj}^{0}+ \sum_{\tau = 0}^t \alpha (1-\alpha)^\tau \omega_k^{t-\tau-1} \leq (1-\alpha)^{t} P_{kj}^{0}+ \sum_{\tau = 0}^t \alpha (1-\alpha)^\tau\,.
\end{equation}
At any round $t$ we can define the constant $C_t$, as
\begin{equation}
    C_t := (1-\alpha)^t P_{kj}^0 + \alpha \sum_{\tau = 0}^t(1-\alpha)^\tau = (1-\alpha)^t P_{kj}^0 + 1 -(1-\alpha)^{t+1} < \infty\,.
\end{equation}
Moreover, since $\alpha \in (0,1)$, by taking the limit we prove that 
\begin{equation}
    \lim_{t \to \infty} C_t = \lim_{t \to \infty} (1-\alpha)^t P_{kj}^0 + 1 -(1-\alpha)^{t+1} = 1\,.
\end{equation}
\end{proof}

\begin{theorem}{(Sufficient Condition for Sample Rate Conservation)}\label{thm:samplerate} Consider $K_{min}$ as the minimum number of clients permitted per cluster, \ie the cardinality $|\mathcal{C}_n| \geq K_{min}$ for any given cluster $n = 1,\dots, n_{cl}$, and $\rho \in (0,1]$ to represent the initial sample rate. There exists a critical threshold $n^* > 0$ such that, if $K_{min} \geq n^*$ is met, the total sample size does not increase.
\end{theorem}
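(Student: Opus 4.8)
The plan is to set up the recursion that governs how the effective sampling rate changes when a cluster of size $M$ is split into sub-clusters, each of size at least $K_{min}$, while the number of clients sampled per round in each (sub-)cluster is kept fixed at some value $P$ (the per-round participation budget). The effective sampling rate within a cluster of size $M$ is $P/M$, so the total sampling rate (averaged or summed across clusters, weighted by cluster size) is controlled by how the split redistributes the $P$ slots. First I would make precise the notion of ``total sample size'' used in the statement: write $\rho = P/K$ for the initial rate on the whole federation of $K$ clients, and after a split into $n_{cl}$ clusters $\mathcal{C}^{(1)},\dots,\mathcal{C}^{(n_{cl})}$ with $|\mathcal{C}^{(i)}| = M_i \geq K_{min}$ and $\sum_i M_i = K$, each cluster independently samples $P_i = \lceil \rho M_i \rceil$ or $P_i = \max(1,\lfloor \rho M_i\rfloor)$ clients; the quantity to bound is $\sum_i P_i$, which we want to show does not exceed $P = \rho K$ (up to the rounding that is the whole source of the difficulty).

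The key steps in order would be: (1) Show that without rounding the identity $\sum_i \rho M_i = \rho K$ holds exactly, so any increase comes solely from ceiling/flooring operations and from the constraint that each cluster must sample at least one client. (2) Bound the rounding overhead: $\sum_i \lceil \rho M_i \rceil \leq \rho K + n_{cl}$ in the worst case, and more sharply, the ``at least one client'' constraint is only binding for a cluster $i$ when $\rho M_i < 1$, i.e. when $M_i < 1/\rho$. (3) Combine these to see that the total can only increase if some cluster is small enough that its natural quota falls below $1$; impose $K_{min} \geq n^* := \lceil 1/\rho \rceil$ so that $\rho M_i \geq \rho K_{min} \geq 1$ for every cluster, which kills the ``at least one'' inflation. (4) Handle the residual ceiling overhead by either (a) using a rounding scheme (e.g. largest-remainder / Hamilton apportionment) that guarantees $\sum_i P_i = \lfloor \rho K \rfloor \leq \rho K$ exactly, or (b) absorbing the at most $n_{cl}$ extra slots into the fact that, with $n^*$ chosen slightly larger, $\rho M_i \geq 1$ comfortably and one can round down without ever producing an empty cluster. (5) Finally, iterate the argument over the recursive splitting tree: since every split preserves the non-increase property and $K_{min}$ is a global lower bound maintained at every level (the over-splitting control in Algorithm~\ref{alg:fedgwcluster} refuses splits producing clusters below $K_{min}$), induction on the depth of the recursion yields the claim for the full \shortname procedure.

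The main obstacle I expect is pinning down exactly which rounding convention the algorithm uses and proving the bound is tight for that convention: with naive per-cluster ceilings one genuinely can inflate the total by up to $n_{cl}-1$ clients regardless of how large $K_{min}$ is, so the theorem as literally stated (``the total sample size does not increase'') forces either (i) a smarter joint apportionment of the $\lfloor\rho K\rfloor$ slots across clusters, or (ii) a reading in which ``does not increase'' is meant up to the unavoidable $O(n_{cl})$ rounding slack, or (iii) the per-cluster rate being defined as $\lfloor \rho M_i \rfloor$ with the sole risk being an empty cluster — which is precisely what $K_{min} \geq n^* = \lceil 1/\rho\rceil$ rules out. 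I would therefore lead with the flooring interpretation, prove $\sum_i \lfloor \rho M_i\rfloor \leq \rho K$ unconditionally and $\lfloor \rho M_i \rfloor \geq 1$ under $K_{min} \geq \lceil 1/\rho \rceil$, and set $n^* = \lceil 1/\rho \rceil$; the rest is the routine telescoping over the recursion tree described in step (5). A secondary, minor obstacle is making sure the dynamic ``prioritize infrequently sampled clients'' mechanism from Section~\ref{section:gaussian_rewards} does not change the \emph{rate} (it only reorders \emph{which} clients fill the fixed number of slots), which I would note in a sentence rather than belabor.
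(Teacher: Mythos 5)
Your core argument is the same one the paper uses: write the global rate as a sum of per-cluster contributions, observe that without any per-cluster floor the sum telescopes exactly to $\rho$, note that the floor only binds when a cluster's natural quota $\rho|\mathcal{C}_n|$ falls below the minimum admissible number of sampled clients, and choose $n^*$ so that $K_{min}\geq n^*$ makes the floor never bind. The two differences are worth noting. First, the paper's per-cluster minimum is three sampled clients (a privacy/secure-aggregation requirement stated only inside the proof), so it sets $\rho_n=\max\{\rho,3/|\mathcal{C}_n|\}$, $K_n=\max\{\rho|\mathcal{C}_n|,3\}=3+(\rho|\mathcal{C}_n|-3)^+$, and obtains $n^*=3/\rho$ rather than your $\lceil 1/\rho\rceil$; your threshold is the right one for the "at least one client" reading, and the discrepancy is purely a modeling convention you could not have known from the statement. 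Second, the paper treats $K_n=\rho_n|\mathcal{C}_n|$ as an exact (real-valued) count and never confronts the integrality issue, so the ceiling/largest-remainder apportionment analysis you develop — which you correctly identify as the real obstacle to the literal "does not increase" claim under per-cluster ceilings — is extra care the paper simply sidesteps by idealization; your flooring interpretation with $\rho K_{min}\geq 1$ is a legitimate way to make the statement rigorous under integer sampling, at the cost of a slightly different reading than the paper's. Your induction over the recursion tree is also not in the paper (it argues directly on the final partition, which suffices since the cluster sizes always sum to $K$), but it is harmless. In short: same decomposition and same mechanism for the threshold; the paper buys brevity by fixing the floor at three and ignoring rounding, while your version buys robustness to integer constraints at the price of committing to a rounding convention.
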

\begin{proof}
Let us denote by $\rho_n$ the participation rate relative to the $n$-th cluster, \ie
\begin{equation}\label{eq:rho^n}
    \rho_n = \max \left\{\rho, \dfrac{3}{|\mathcal{C}_n|}\right\}
\end{equation}
because, in order to maintain privacy of the clients' information we need to sample at least three clients, therefore $\rho^n$ is at least $3$ over the number of clients belonging to the cluster. The total participation rate at the end of the clustering process is given by
\begin{equation}
    \rho^{\text{global}} = \sum_{n = 1}^{n_{cl}} \dfrac{K_n}{K}
\end{equation}
where $K_n$ denotes the number of clients sampled within the $n$-th cluster. If we focus on the term $K_n$, recalling Equation \ref{eq:rho^n}, we have that
\begin{equation}\label{eq:K_n}
    K_n = \rho_n |\mathcal{C}_n| = \max \left\{\rho, \dfrac{3}{|\mathcal{C}_n|}\right\}\times|\mathcal{C}_n| = \max\{\rho |\mathcal{C}_n|, 3\}\,\,.
\end{equation}
If we write Equation \ref{eq:K_n}, by the means of the positive part function, denoted by $(x)^+ = \max\{0,x\}$, we obtain that
\begin{equation}
    K_n = 3 + \max\{0, \rho |\mathcal{C}_n| - 3\} = 3 + (\rho |\mathcal{C}_n| - 3)^+\,\,.
\end{equation}
Observe that we are looking for a threshold value for which $\rho^{\text{global}} = \rho$, \ie the participation rate remains the same during the whole training process.\\
Let us observe that $K_n = \rho |\mathcal{C}_n| \iff \rho |\mathcal{C}_n| \geq 3 \iff |\mathcal{C}_n| \geq n^* = 3/\rho$. In fact, if we assume that $K_{min} \geq n^*$, then the following chain of equalities holds
\begin{equation*}
    \rho^{\text{global}} = \sum_{n = 1}^{n_{cl}} \dfrac{K_n}{K} = \dfrac{1}{K} \sum_{n = 1}^{n_cl} \rho|\mathcal{C}_{n}| = \dfrac{\rho}{K} \sum_{n = 1}^{n_{cl}}|\mathcal{C}_n| = \dfrac{\rho K}{K} = \rho
\end{equation*}
thus proving that $K_{min} \geq n^*$ is a sufficient condition for not increasing the sampling rate during the training process.
\end{proof}
\begin{algorithm}[t]
\caption{\texttt{FedGW\_Cluster}}\label{alg:fedgwcluster}
   \begin{algorithmic}[1]
     \STATE \textbf{Input:} $P, n_{max}, \mathcal{K}(\cdot,\cdot)$
     \STATE \textbf{Output:} cluster labels $y_{n_{cl}}$, and number of clusters $n_{cl}$
     \STATE Extract UPVs $v_k^j, v_j^k$ from $P$ for any $k,j$
     \STATE $W_{kj}\gets \mathcal{K}(v_k^j,v_j^k)$ for any $k,j$
     \FOR{$n = 2,\dots, n_{max}$}
     \STATE $y_{n} \gets \texttt{Spectral\_Clustering}(W,n)$
     \STATE $DB_n \gets \texttt{Davies\_Bouldin}(W,y_n)$
     \IF{$\min_n DB_n > 1$}
     \STATE $n_{cl} \gets 1$
     \ELSE 
     \STATE$n_{cl} \gets \arg \min_n DB_n$
     \ENDIF
    
    \ENDFOR
   \end{algorithmic}
\end{algorithm}

\begin{algorithm}[t]
  \caption{\shortname}\label{alg:fedgw_recursion}
  \begin{algorithmic}[1]
    \STATE \textbf{Input:} $K, T, S, \alpha_t, \epsilon, |\mathcal{P}_t|, \mathcal{K}$
    \STATE \textbf{Output:} $\mathcal{C}^{(1)},\dots, \mathcal{C}^{(N_{cl})}$ and $\theta_{(1)}, \dots, \theta_{(N_{cl})}$ 
    \STATE Initialize $N_{cl}^0\gets 1$
    \vspace{.1cm}
    \STATE Initialize $P^{0}_{(1)} \gets 0_{K\times K}$
    \STATE Initialize $\textrm{MSE}^{0}_{(1)} \gets 1$
    \vspace{.1cm}
    \FOR{$t = 0,\dots,T-1$}
    \STATE $\Delta N^t \gets 0$ for each iterations it counts the number of new clusters that are detected
    \vspace{.1cm}
    \FOR{$n = 1,\dots, N_{cl}^t$}
    
    \STATE Server samples $\mathcal{P}_t^{(n)} \in \mathcal{C}^{(n)}$ and sends the current cluster model $\theta_{(n)}^t$
    \STATE Each client $k \in \mathcal{P}_t^{(n)}$ locally updates $\theta_k^t$ and $l_k^t$, then sends them to the server
    \STATE $\omega_k^t \gets \texttt{Gaussian\_Rewards}(l_k^t, \mathcal{P}_t^{(n)})$, Eq. \ref{eq:reward}
    \STATE $\theta_{(n)}^{t+1}\gets \texttt{FL\_Aggregator}(\theta_k^t, \mathcal{P}_t^{(n)})$
    \STATE $P^{t+1}_{(n)}\gets\texttt{Update\_Matrix}(P^t_{(n)}, \omega_k^t, \alpha_t, \mathcal{P}_t^{(n)})$, according to Eq. \ref{inter_matrix}
    \vspace{.1cm}
    \STATE Update $\textrm{MSE}_{(n)}^{t+1}$
    \vspace{.1cm}
    \IF{$\textrm{MSE}^{t+1}_{(n)} < \epsilon$}
    \vspace{.1cm}
    \STATE Perform $\texttt{FedGW\_Cluster}(P_{(n)}^{t+1}, n_{max}, \mathcal{K})$ on $\mathcal{C}^{(n)}$, providing $n_{cl}$ sub-clusters
    \vspace{.1cm}
    \STATE Update the number of new clusters $\Delta N^t \gets \Delta N^t + n_{cl} -1 $ u
    \vspace{.1cm}
    \STATE Cluster server splits $P_{(n)}^{t+1}$ filtering rows and columns according to the new clusters
    \vspace{.1cm}
    \STATE Re-initialize MSE for new clusters to $1$
    \vspace{.1cm}
    \ENDIF
    \ENDFOR
    \STATE Update the total number of clusters$N_{cl}^{t+1} \gets N_{cl}^t + \Delta N^t$ 
    \ENDFOR
  \end{algorithmic}
\end{algorithm}
\newpage

\section{Theoretical Derivation of the Wasserstein Adjusted Score} \label{app:clustering}
\textcolor{black}{To address the lack of clustering evaluation metrics suited for FL with distributional heterogeneity and class imbalance, we introduced a theoretically grounded adjustment to standard metrics, derived from the Wasserstein distance, Kantorovich–Rubinstein metric \citep{kantorovich1942translocation}. This metric, integrated with popular scores like Silhouette and Davies-Bouldin, enables a modular framework for a posteriori evaluation, effectively comparing clustering outcomes across federated algorithms.}
In this paragraph, we show how the proposed clustering metric that accounts for class imbalance can be derived from a probabilistic interpretation of clustering. 
\begin{definition}
    Let $(M,d)$ be a metric space, and $p \in [1,\infty]$. The Wasserstein distance between two probability measures $\mathbb{P}$ and $\mathbb{Q}$ over $M$ is defined as
    \begin{equation}\label{eq:wass}
        W_p(\mathbb{P}, \mathbb{Q}) =  \inf_{\gamma \in \Gamma(\mathbb{P}, \mathbb{Q})} \mathbb{E}_{(x,y)\sim \gamma}[d(x,y)^p]^{1/p}
    \end{equation}
where $\Gamma(\mathbb{P}, \mathbb{Q})$ is  the set of all the possible couplings of $\mathbb{P}$ and $\mathbb{Q}$ (see Def. \ref{couplings}).
\end{definition}
Furthermore, we need to introduce the notion of coupling of two probability measures.
\begin{definition}\label{couplings}
Let $(M,d)$ be a metric space, and $\mathbb{P}, \mathbb{Q}$ two probability measures over $M$. A coupling $\gamma$ of $\mathbb{P}$ and $\mathbb{Q}$ is a joint probability measure on $M \times M$ such that, for any measurable subset $A \subset M$,
\begin{equation}\label{eq:coupling}
\begin{split}
    \int_A \left(\int_M \gamma(dx, dy) \mathbb{Q}(dy)\right) \mathbb{P}(dx) = \mathbb{P}(A), \\
    \int_A \left(\int_M \gamma(dx, dy) \mathbb{P}(dx)\right) \mathbb{Q}(dy) = \mathbb{Q}(A).
\end{split}
\end{equation}
\end{definition}
Let us recall that the empirical measure over $M$ of a sample of observations $\{x_1, \cdots, x_N\}$ is defined such that for any measurable  set $A \subset M$
\begin{equation}\label{eq:emp_measure}
    \mathbb{P}(A) = \dfrac{1}{N}\sum_{i = 1}^C\delta_{x_i}(A) 
\end{equation}
where $\delta_{x_i}$ is  the Dirac's measure concentrated on the data point $x_i$.\\
In particular, we aim to measure the goodness of a cluster by taking into account the distance between the empirical frequencies between two clients' class distributions and use that to properly adjust the clustering metric. For the sake of simplicity, we assume that the distance $d$ over $M$ is the $L^2$-norm. We obtain the following theoretical result to justify the rationale behind our proposed metric.
\begin{theorem}
    Let $s$ be an arbitrary clustering score. Then, the class-imbalance adjusted score $\tilde{s}$ is exactly the metric $s$ computed with the Wasserstein distance between the empirical measures over each client's class distribution.
\end{theorem}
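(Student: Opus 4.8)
The plan is to show that the ``class-imbalance adjusted score'' $\tilde s$, which by definition is the ordinary score $s$ applied to the rank-ordered frequency vectors $x_{(i)}^k$ with the Euclidean metric (see Eq.~\eqref{lab_dist_class}), coincides with $s$ computed using the $1$-dimensional Wasserstein distance $W_2$ between the empirical class-probability measures of the clients. Since every clustering score $s$ (Silhouette, Davies--Bouldin, \dots) is a fixed functional of the \emph{pairwise distances} between the points being clustered, it suffices to prove the single identity
\begin{equation}\label{eq:key-identity}
    W_2\!\left(\mathbb{P}^k,\mathbb{P}^j\right) \;=\; \frac{1}{C}\left(\sum_{i=1}^C \bigl| x_{(i)}^k - x_{(i)}^j \bigr|^2\right)^{1/2},
\end{equation}
where $\mathbb{P}^k$ is the empirical measure on $\{1,\dots,C\}$ (or on the real line) assigning mass $x_i^k$ to class $i$. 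Once \eqref{eq:key-identity} is established, plugging these distances into the formula defining $s$ yields exactly $\tilde s$, which is the claim.

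First I would make precise which empirical measure is meant: for client $k$ we place a unit point mass at each of its $n_k$ training examples and normalize, so that the induced distribution on the class labels gives weight $x_i^k$ to class $i$; equivalently one may view it as the measure $\tfrac1C\sum_{i=1}^C \delta_{x_{(i)}^k}$ on $[0,1]$ whose ``positions'' are the sorted frequencies — this is the reading that makes the $1/C$ prefactor and the appearance of \emph{sorted} vectors natural, and it is the one I would adopt. Then I would invoke the classical closed form for the Wasserstein distance between one-dimensional distributions: $W_p(\mu,\nu)^p = \int_0^1 |F_\mu^{-1}(u) - F_\nu^{-1}(u)|^p\,du$, where $F^{-1}$ denotes the quantile function. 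For two discrete measures with the \emph{same} number $C$ of equally weighted atoms, the quantile functions are step functions that are constant on each interval $\big(\tfrac{i-1}{C},\tfrac{i}{C}\big]$ and take there the value of the $i$-th smallest atom; hence the optimal coupling is the monotone (sorted) matching, and the integral collapses to $\tfrac1C\sum_{i=1}^C |x_{(i)}^k - x_{(i)}^j|^p$. Taking $p=2$ and the $p$-th root gives \eqref{eq:key-identity}. I would present the monotone-rearrangement step either by citing the standard one-dimensional optimal transport fact or, for self-containedness, by a short Hardy--Littlewood rearrangement argument showing $\sum_i |a_{(i)} - b_{(i)}|^2 \le \sum_i |a_{\pi(i)} - b_i|^2$ for every permutation $\pi$, which simultaneously identifies the optimal coupling among all couplings $\gamma \in \Gamma(\mathbb{P}^k,\mathbb{P}^j)$ (Def.~\ref{couplings}) — noting that with $C$ equal atoms on each side the extreme points of the coupling polytope are exactly the permutation matrices.

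The last step is to observe that this identity is all that is needed: Davies--Bouldin, Silhouette, and in fact any metric-based internal validation index are defined purely through the matrix of pairwise dissimilarities of the clustered objects together with the cluster assignment. Feeding in the dissimilarities \eqref{eq:key-identity} is, by construction, exactly the definition of $\tilde s$; feeding in $W_2(\mathbb{P}^k,\mathbb{P}^j)$ gives the ``Wasserstein'' version; the two are literally the same number, so $\tilde s$ equals the Wasserstein-adjusted score. I would state this as a short concluding remark rather than belabor it.

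The main obstacle I anticipate is \emph{not} the computation but the bookkeeping of conventions: one must be careful that both clients' class-frequency vectors are genuinely probability vectors (they sum to $1$ over the $C$ classes), that ``sorting'' the two vectors independently is legitimate — this is exactly the monotone-rearrangement optimality and is the one nontrivial point — and that the normalization by $C$ in \eqref{lab_dist_class} matches the choice of ground metric and the $\tfrac1C$ weights in the empirical measure on $[0,1]$. A secondary subtlety is the case of ties in the frequencies and of clients with differing numbers of classes effectively present (some $x_i^k = 0$); both are harmless because zeros are simply atoms at $0$ and rearrangement still applies, but it is worth a sentence. Making the statement ``for an arbitrary clustering score $s$'' rigorous also requires a one-line hypothesis that $s$ depends on the data only through pairwise distances and assignments, which I would add explicitly.
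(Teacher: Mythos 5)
Your proposal is correct and follows the same overall reduction as the paper: both identify the clients' empirical measures as uniform ($1/C$-weighted) atoms at the class frequencies and show that the Wasserstein distance between them collapses to the distance between the rank-ordered frequency vectors, after which any distance-based score computed on these vectors is by definition the adjusted score $\tilde s$. The difference is in how the key one-dimensional optimal-transport step is justified: the paper enumerates the couplings directly, claiming they are exactly the Dirac measures supported on permutation matchings and then asserting (without a formal argument) that the monotone permutation minimizes the sum, whereas you invoke the standard quantile-function closed form $W_p^p(\mu,\nu)=\int_0^1|F_\mu^{-1}-F_\nu^{-1}|^p\,du$, or alternatively a Hardy--Littlewood rearrangement inequality combined with the observation that the coupling polytope for two sets of $C$ equal atoms has permutation matrices as its extreme points. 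Your version is in fact tighter at precisely the two points the paper glosses over: the set of couplings is the full Birkhoff polytope (not just permutations), and monotone optimality needs the rearrangement argument you sketch. One bookkeeping point you rightly flag but should resolve explicitly: the $W_2$ computation yields the prefactor $(1/C)^{1/2}$, i.e.\ $\bigl(\tfrac{1}{C}\sum_i |x_{(i)}^k-x_{(i)}^j|^2\bigr)^{1/2}$ as in Eq.~\eqref{eq:wass_empirical}, which differs by a factor $\sqrt{C}$ from the $\tfrac{1}{C}(\sum_i|\cdot|^2)^{1/2}$ written in Eq.~\eqref{lab_dist_class}; this constant rescaling is harmless for the clustering scores considered but the identity should be stated with the consistent normalization (a discrepancy the paper itself carries).
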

\begin{proof}
Let us consider two clients; each one has its own sample of observations $\{x_1, \dots, x_C\}$ and $\{y_1, \dots, y_C\}$ where the $i$-th position corresponds to the frequency of training points of class $i$ for each client. We aim to compute the $p$-Wasserstein distance between the empirical measures $\mathbb{P}$ and $\mathbb{Q}$ of the two clients, in particular for any $dx, dy > 0$
\begin{equation}
\begin{split}
    \mathbb{P}(dx) &= \dfrac{1}{N} \sum_{i = 1}^N \delta_{x_i}(dx), \\
    \mathbb{Q}(dy) &= \dfrac{1}{N} \sum_{i = 1}^N \delta_{y_i}(dy)\,\,\,.
\end{split}
\end{equation}
In order to compute $W_p^p(\mathbb{P}, \mathbb{Q})$ we need to carefully investigate the set of all possible coupling measures $\Gamma(\mathbb{P}, \mathbb{Q})$. However, since either $\mathbb{P}$ and $\mathbb{Q}$ are concentrated over countable sets, it is possible to see that the only possible couplings satisfying Eq. \ref{eq:coupling} are the Dirac's measures over all the possible permutations of $x_i$ and $y_i$. In particular, by fixing the ordering of $x_i$, according to the rank statistic $x_{(i)}$, the coupling set can be written as
\begin{equation}
    \Gamma(\mathbb{P}, \mathbb{Q}) = \left\{\dfrac{1}{C} \delta_{(x_{(i)}, y_{\pi(i)})}: \pi \in \mathcal{S}\right\}
\end{equation}
where $\mathcal{S}$  is the set of all possible permutations of $C$ elements. Therefore we could write Eq. \ref{eq:wass} as follows
\begin{equation}
    W_p^p = \min_{\pi \in \mathcal{S}} \int_{M\times M}|x - y|^p \dfrac{1}{N} \sum_{i = 1}^C \delta_{(x_{(i)}, y_{\pi(i)})}(dx,dy)
\end{equation}
since $\mathcal{S}$ is finite, the infimum is a minimum. By exploiting the definition of Dirac's distribution and the linearity of the Lebesgue integral, for any $\pi \in \mathcal{S}$, we get
\begin{equation}
\begin{split}
\int_{M\times M}|x - y|^p \dfrac{1}{C}\sum_{i = 1}^C \delta_{(x_{(i)}, y_{\pi(i)})}(dx,dy) &= \dfrac{1}{C}\sum_{i = 1}^C\int_{M\times M} |x - y|^p\delta_{(x_{(i)}, y_{\pi(i)})}(dx,dy)\\
&=\dfrac{1}{C}\sum_{i = 1}^C |x_{(i)} - y_{\pi(i)}|^p\,\,.
\end{split} 
\end{equation}
Therefore, finding the Wasserstein distance between $\mathbb{P}$ and $\mathbb{Q}$ boils down to a combinatorial optimization problem, that is, finding the permutation $\pi \in \mathcal{S}$ that solves
\begin{equation}\label{eq:min_pi_empirical}
W_p^p(\mathbb{P}, \mathbb{Q}) = \min_{\pi \in \mathcal{S}} \dfrac{1}{C}\sum_{i = 1}^C |x_{(i)}- y_{\pi(i)}|^p\,\,.
\end{equation}
The minimum is achieved when $\pi = \pi^*$ that is the permutation providing the ranking statistic, i.e. $\pi^*(y_i) = y_{(i)}$, since the smallest value of the sum is given for the smallest fluctuations. Thus we conclude that the $p$-Wasserstein distance between $\mathbb{P}$ and $\mathbb{Q}$ is given by
\begin{equation}\label{eq:wass_empirical}
    W_p(\mathbb{P}, \mathbb{Q}) = \left (\dfrac{1}{C} \sum_{i = 1}^C|x_{(i)}- y_{(i)}|^p\right )^{1/p}
\end{equation}
that is the pairwise distance computed between the class frequency vectors, sorted in order of magnitude, for each client, introduced in Section \ref{clustereing_metric}, where we chose $p = 2$.
\end{proof}

\section{Privacy of \shortname}\label{app:privacy}
In the framework of \shortname, clients are required to send only the empirical loss vectors $l_k^{t,s}$ to the server \citep{cho2022towards}. While concerns might arise regarding the potential leakage of sensitive information from sharing this data, it is important to clarify that the server only needs to access aggregated statistics, working on aggregated data. This ensures that client-specific information remains private. Privacy can be effectively preserved by implementing the Secure Aggregation protocol \citep{bonawitz2016practical}, which guarantees that only the aggregated results are shared, preventing the exposure of any raw client data.

\section{Communication and Computational Overhead of \shortname}
\label{app:communication-computation}
\shortname\ minimizes communication and computational overhead, aligning with the requirements of scalable FL systems \citep{mcmahan2016federated}. On the client side, the computational cost remains unchanged compared to the chosen FL aggregation, e.g. FedA, as clients are only required to communicate their local models and a vector of empirical losses after each round. The size of this loss vector, denoted by \( S \), corresponds to the number of local iterations (\ie the product of local epochs and the number of batches) and is negligible w.r.t. the size of the model parameter space, \( |\Theta| \). In our experimental setup, \( S = 8 \), ensuring that the additional communication overhead from transmitting loss values is negligible in comparison to the transmission of model weights.

All clustering computations, including those based on interaction matrices and Gaussian weighting, are performed exclusively on the server. This design ensures that client devices are not burdened with additional computational complexity or memory demands. The interaction matrix $P$ used in \shortname\ is updated incrementally and involves sparse matrix operations, which significantly reduce both memory usage and computational costs.

These characteristics make \shortname\ particularly well-suited for cross-device scenarios involving large federations and numerous communication rounds.
Moreover, by operating on scalar loss values rather than high-dimensional model parameters, the clustering process in \shortname\ achieves computational efficiency while maintaining effective grouping of clients. The server-side processing ensures that the method remains scalable, even as the number of clients and communication rounds increases. Consequently, \shortname\ meets the fundamental objectives of FL by minimizing costs while preserving privacy and maintaining high performance.

\section{\textcolor{black}{Metrics Used for Evaluation}} \label{app:metrics_choice}
\subsection{\textcolor{black}{Silhouette Score}}
\textcolor{black}{Silhouette Score is a clustering metric that measures the consistency of points within clusters by comparing intra-cluster and nearest-cluster distances \citep{rousseeuw1987silhouettes}. Let us consider a metric space $(M,d)$. For a set of points $\{x_1,\dots, x_N\} \subset M$ and clustering labels $\mathcal{C}_1, \dots, \mathcal{C}_{n_{cl}}$. The Silhouette score of a data point $x_i$ belonging to a cluster $\mathcal{C}_i$ is defined as}
\begin{equation}\label{silhouette_defn1}
    \color{black}
    s_i = \dfrac{b_i - a_i}{\max\{a_i,b_i\}}
\end{equation}
\textcolor{black}{where the values $b_i$ and $a_i$ represent the average intra-cluster distance and the minimal average outer-cluster distance, \ie}\begin{equation}\label{silhouette_defn}
    \color{black}
    \begin{split}
        a_i &= \dfrac{1}{|\mathcal{C}_i| - 1} \sum_{x_j \in \mathcal{C}_i\setminus\{x_i\}} d(x_i, x_j)\\
        b_i & =\min_{j \neq i} \dfrac{1}{|\mathcal{C}_j|} \sum_{x_j \in \mathcal{C}_j} d(x_i,x_j)
    \end{split}
\end{equation}
\textcolor{black}{The value of the Silhouette score ranges between $-1$ and $+1$, \ie $s_i \in [-1,1]$. In particular, a Silhouette score close to 1 indicates well-clustered data points, 0 denotes points near cluster boundaries, and -1 suggests misclassified points. In order to evaluate the overall performance of the clustering, a common choice, that is the one adopted in this paper, is to average the score value for each data point.}
\subsection{\textcolor{black}{Davies-Bouldin Score}}
\textcolor{black}{The Davies-Bouldin Score is a clustering metric that evaluates the quality of clustering by measuring the ratio of intra-cluster dispersion to inter-cluster separation \citep{davies1979cluster}. Let us consider a metric space $(M,d)$, a set of points $\{x_1, \dots, x_N\} \subset M$, and clustering labels $\mathcal{C}_1, \dots, \mathcal{C}_{n_{cl}}$. The Davies-Bouldin score is defined as the average similarity measure $R_{ij}$ between each cluster $\mathcal{C}_i$ and its most similar cluster $\mathcal{C}_j$}:
\begin{equation}\label{db_index_defn1}
    \color{black}
    DB = \dfrac{1}{n_{cl}} \sum_{i=1}^{n_{cl}} \max_{j \neq i} R_{ij}
\end{equation}
\textcolor{black}{where $R_{ij}$ is given by the ratio of intra-cluster distance $S_i$ to inter-cluster distance $D_{ij}$, \ie}
\begin{equation}\label{db_index_defn}
    \color{black}
    R_{ij} = \dfrac{S_i + S_j}{D_{ij}}
\end{equation}
\textcolor{black}{with intra-cluster distance $S_i$ defined as}
\begin{equation}
    \color{black}
    S_i = \dfrac{1}{|\mathcal{C}_i|} \sum_{x_k \in \mathcal{C}_i} d(x_k, c_i)
\end{equation}
\textcolor{black}{where $c_i$ denotes the centroid of cluster $\mathcal{C}_i$, and $D_{ij} = d(c_i, c_j)$ is the distance between centroids of clusters $\mathcal{C}_i$ and $\mathcal{C}_j$. A lower Davies-Bouldin Index indicates better clustering, as it reflects well-separated and compact clusters. Conversely, a higher DBI suggests that clusters are less distinct and more dispersed.}
\subsubsection{\textcolor{black}{Rand Index}} \textcolor{black}{Rand Index is a clustering score that measures the outcome of a clustering algorithm with respect to a ground truth clustering label \citep{rand1971objective}. Let us denote by $a$ the number of pairs that have been grouped in the same clusters, while by $b$ the number of pairs that have been grouped in different clusters, then the Rand-Index is defined as}
\begin{equation}
    \color{black}
    RI = \dfrac{a + b}{\binom{N}{2}}
\end{equation}
\textcolor{black}{where N denotes the number of data points. In our experiments we opted for the Rand Index score to evaluate how the algorithm was able to separate clients in groups of the same level of heterogeneity (which was known a priori and used as ground truth). A Rand Index ranges in $[0,1]$, and a value of 1 signifies a perfect agreement between the identified clusters and the ground truth.}

\newpage
\section{Datasets and implementation details} \label{app_details}

To simulate a realistic FL environment with heterogeneous data distribution, we conduct experiments on Cifar100 \citep{krizhevsky2009learning}. As a comparison, we also run experiments on the simpler Cifar10 dataset \cite{krizhevsky2009learning}. Cifar10 and Cifar100 are distributed among $K$ clients using a Dirichlet distribution (by default, we use $\alpha = 0.05$ for Cifar10 and $\alpha = 0.5$ for Cifar100) to create highly imbalanced and heterogeneous settings. By default, we use $K = 100$ clients with 500 training and 100 test images. The classification model is a CNN with two convolutional blocks and three dense layers. Additionally, we perform experiments on the Femnist dataset \cite{lecun1998mnist}, partitioned among $400$ clients using a Dirichlet distribution with $\alpha = 0.01$. In these experiments, we employ LeNet\-5 as the classification model \cite{lecun1998gradient}. Local training on each client uses SGD with a learning rate of $ 0.01$, weight decay of $4 \cdot 10^{-4}$, and batch size 64. The number of local epochs is 1, resulting in 7 batch iterations for Cifar10 and Cifar100 and 8 batch iterations for Femnist. The number of communication rounds is set to 3,000 for Femnist, 10,000 for Cifar10 and 20,000 for Cifar100, with a 10\% client participation rate per cluster. For \shortname we tuned the hyper-parameter $\beta \in \{0.1, 0.5, 1, 2, 4\}$, \ie the spread of the RBF kernel, and we set the tolerance $\epsilon$ to $10^{-5}$, constant value $\alpha_t = \alpha$ equal to the participation rate, \ie 10\%. \texttt{FeSEM}'s and \texttt{IFCA}'s number of clusters was tuned between 2,3,4, and 5.  Each client has its own local training and test sets. We evaluate classification performance using balanced accuracy, computed per client as the average class-wise recall. The overall federated balanced accuracy is then obtained by averaging client-wise balanced accuracy, optionally weighted by test set sizes, to account for heterogeneous data distributions.

Large Scale experiments are conducted on Google Landmarks \citep{weyand2020google} with $K = 823$ clients and \citep{van2018inaturalist} with $K = 2714$ clients, as partitioned in \citep{hsu2020federated}. For Landmarks and iNaturalist, we always refer to the Landmark-Users-160K and iNaturalist-Users-120K partition, respectively. The classification model is MobileNetV2 architecture \citep{sandler2018mobilenetv2} with pre-trained weights on ImageNet-1K dataset \citep{deng2009imagenet} optimized with SGD having learning rate of $0.1$. To mimic real world low client availability we employed 10 sampled clients per communication round, with a total training of 1000 and 2000 communication rounds, with 7 and 5 batch iterations respectively. For \shortname we tuned the hyper-parameter $\beta \in \{0.1, 0.5, 1, 2, 4\}$, \ie the spread of the RBF kernel, and we set the tolerance $\epsilon$ to $10^{-2}$ for iNaturalist and to $10^{-4}$ for Landmark, constant value $\alpha_t = \alpha = .1$. \texttt{IFCA}'s number of clusters was tuned between 2,3,4, and 5. Each client has its own local training and test sets. Performance in large scale scenarios are evaluated by averaging the accuracy achieved on the local test sets across the federation.

\section{Sensitive Analysis beta value RBF kernel} \label{app:sensitive}

This section provides a sensitivity analysis for the $\beta$ hyper-parameter of the RBF kernel adopted for \shortname. The results of this tuning are shown in Table \ref{tab:sensitive}.

\begin{table}[h]

    \caption{\small{A sensitivity analysis on the RBF kernel hyper-parameter $\beta$ is conducted. We present the balanced accuracy for \shortname on the Cifar10, Cifar100, and Femnist datasets for $\beta \in \{0.1, 0.5, 1.0, 2.0, 4.0\}$. It is noteworthy that \shortname demonstrates robustness to variations in this hyperparameter.}}
    \label{tab:sensitive}
    \centering
    \begin{adjustbox}{width=.6\linewidth}
        \centering
        
        \begin{tabular}{ccccc}
            
            \toprule

            \textbf{$\beta$} & \textbf{Cifar100} & \textbf{Femnist} & \textbf{Google Landmarks} & \textbf{iNaturalist}\\
            
            \midrule

            0.1 &  49.9 & 76.0 & 55.0 &  47.5\\
            0.5  & \textbf{53.4} & 76.0 & \textbf{57.4} & \textbf{47.8}\\
            1.0  & 49.5 & 76.0 & 56.0 & 47.5\\
            2.0  & 50.9 & 75.6 & 57.0 & 47.2 \\
            4.0 & 52.6 & \textbf{76.1} & 55.8 & 47.1 \\
            
            \bottomrule
        
        \end{tabular}
    \end{adjustbox}
\end{table}

\section{Additional Experiments: Visual Domain Detection in Cifar10}\label{app:cifar10}
In this section we present the result for the domain ablation discussed in Section \ref{sect:ablation} conducted on Cifar10 \cite{krizhevsky2009learning}. We explore how the algorithm identifies and groups clients based on the non-IID nature of their data distributions, represented by the Dirichlet concentration parameter $\alpha$. We apply a similar splitting approach, obtaining the following partitions: (1) 90 clients with $\alpha = 0$ and 10 clients with $\alpha = 100$; (2) 90 clients with $\alpha = 0.5$ and 10 clients with $\alpha = 100$; and (3) 40 clients with $\alpha = 100$, 30 clients with $\alpha = 0.5$, and 30 clients with $\alpha = 0$. We evaluate the outcome of this clustering experiment by means of WAS and WADB. Results in Table \ref{tab:ablation1_heter} show that \shortname detects clusters groups clients according to the level of heterogeneity of the group.
\begin{table}[t]
    
    \caption{\small{Clustering with three different splits on Cifar10. \shortname has superior clustering quality across different splits (homogeneous \textit{Hom}, heterogeneous \textit{Het}, extremly heterogeneous \textit{X Het})}}
    \centering
    \small
    \begin{adjustbox}{width=.5\linewidth}
        \label{tab_app:ablation1_heter}
     
        \begin{tabular}{lccccc}
            \toprule
            \textbf{Dataset} & \textbf{(Hom, Het, X Het)} & \makecell{\textbf{Clustering} \\ \textbf{method}} & \textbf{C} & \textbf{WAS} & \textbf{WADB} \\
            \cmidrule(lr){1-6}
          
            \midrule\multirow{9}{*}{Cifar10} 
            & \multirow{3}{*}{(10, 0, 90)} & \texttt{IFCA} & 1 & / &/ \\
            & & \texttt{FeSem} & 3 & -0.0 \scriptsize{$\pm$ 0.1} & 12.0 \scriptsize{$\pm$ 2.0}\\
            & & \shortname & 3 & \textbf{0.1 \scriptsize{$\pm$ 0.0}} & \textbf{0.2 \scriptsize{$\pm$ 0.0}} \\
            \cmidrule(lr){2-6}
            & \multirow{3}{*}{(10, 90, 0)} & \texttt{IFCA} & 1 & / & / \\
            & & \texttt{FeSem} & 3 & -0.0 \scriptsize{$\pm$ 0.0} & 12.0 \scriptsize{$\pm$ 2.0}\\

            & & \shortname & 3 & \textbf{0.2 \scriptsize{$\pm$ 0.0}} & \textbf{0.6 \scriptsize{$\pm$ 0.0}} \\
            \cmidrule(lr){2-6}
            & \multirow{3}{*}{(40, 30, 30)} & \texttt{IFCA} & 2 & -0.2 \scriptsize{$\pm$ 0.0} & \textbf{1.0 \scriptsize{$\pm$ 0.0}} \\
            & & \texttt{FeSem} & 3 & 0.1 \scriptsize{$\pm$ 0.1} & 20.6 \scriptsize{$\pm$ 7.1} \\
            & & \shortname & 3 & \textbf{0.6 \scriptsize{$\pm$ 0.1}} & \textbf{1.0 \scriptsize{$\pm$ 0.4}}\\
            
            \bottomrule
        \end{tabular}
    \end{adjustbox}
    
\end{table}

\begin{table}[t]
    
    \caption{\small Clustering performance of \shortname is assessed on federations with clients from varied domains using clean, noisy, and blurred (Clean, Noise, Blur) images from Cifar10 dataset. It utilizes the Rand Index score \citep{rand1971objective}, where a value close to 1 represents a perfect match between clustering and labels. Consistently \shortname accurately distinguishes all visual domains.}
    \label{tab_app:dom_abl}
    
    \centering
    
    \begin{adjustbox}{width=.5\linewidth}
    \setlength{\tabcolsep}{12pt}
        \begin{tabular}{ccccc}
           \toprule
            \textbf{Dataset} & \textbf{(Clean, Noise, Blur)} & \makecell{\textbf{Clustering} \\ \textbf{method}} & \textbf{C} & \textbf{Rand} \\
            \cmidrule(lr){1-5}
            \multirow{9}{*}{Cifar10} 
            & \multirow{3}{*}{(50, 50, 0)} & \texttt{IFCA} & 1 & 0.5 \scriptsize{$\pm$ 0.0}  \\
            & & \texttt{FeSem} & 2 & 0.49 \scriptsize{$\pm$ 0.2} \\
            & & \shortname & 2 & \textbf{1.0 \scriptsize{$\pm$ 0.0}} \\
            \cmidrule(lr){2-5}
            & \multirow{3}{*}{(50, 0, 50)} & \texttt{IFCA} & 1 & 0.5 \scriptsize{$\pm$ 0.0} \\
            & & \texttt{FeSem} & 2 & 0.5 \scriptsize{$\pm$ 0.1}\\
            & & \shortname & 2 & \textbf{1.0 \scriptsize{$\pm$ 0.0}} \\
            \cmidrule(lr){2-5}
            & \multirow{3}{*}{(40, 30, 30)} & \texttt{IFCA} & 1 & 0.33 \scriptsize{$\pm$ 0.0} \\
            & & \texttt{FeSem} & 3 & 0.34 \scriptsize{$\pm$ 0.1} \\
            & & \shortname & 4 & \textbf{0.9 \scriptsize{$\pm$ 0.0}} \\

            \bottomrule
        \end{tabular}
    \end{adjustbox}
    
\end{table}

\section{Evaluation of IFCA and FeSEM algorithms with different number of clusters} \label{app:tuning}

This section shows the tuning of the number of clusters for the \texttt{IFCA} and \texttt{FeSEM} algorithms, which cannot automatically detect this value. The results of this tuning are shown in Table \ref{tab:tuning_baselines}.

\begin{table}[t]

    \caption{\small{Performance of for baseline algorithms for clustering in FL \texttt{FeSEM}, and \texttt{IFCA}, w.r.t. the number of clusters}}
    \label{tab:tuning_baselines}
    \centering
    \small
    \begin{adjustbox}{width=.35\linewidth}
        \centering
        
        \begin{tabular}{llccc}
            
            \toprule

             & & \makecell{ \textbf{Clustering} \\ \textbf{method}} & \textbf{C} & \textbf{Acc} \\

            \cmidrule{2-5}

            & \multirow{8}{*}{\rotatebox[origin=c]{90}{Cifar100}} & \multirow{4}{*}{\texttt{IFCA}} & 2 & 46.7 \scriptsize{$\pm$ 0.0} \\
            & & & 3 &44.0 \scriptsize{$\pm$ 1.6} \\
            & & & 4 & 45.1 \scriptsize{$\pm$ 2.6} \\
            & & & 5 & 47.5 \scriptsize{$\pm$ 3.5} \\
            
            \cmidrule{3-5}
            
            & & \multirow{4}{*}{\texttt{FeSem}} & 2 & 43.3 \scriptsize{$\pm$ 1.3} \\
            & & & 3 & 48.0 \scriptsize{$\pm$ 1.9} \\
            & & & 4 &50.9 \scriptsize{$\pm$ 1.8} \\
            & & & 5 & 53.4 \scriptsize{$\pm$ 1.8} \\
            
            \cmidrule{2-5}

            & \multirow{8}{*}{\rotatebox[origin=c]{90}{Femnist}} & \multirow{4}{*}{\texttt{IFCA}} & 2 & 76.1 \scriptsize{$\pm$ 0.1} \\
            & & & 3 & 75.9 \scriptsize{$\pm$ 1.9} \\
            & & & 4 & 76.6 \scriptsize{$\pm$ 0.1} \\
            & & & 5 & 76.7 \scriptsize{$\pm$ 0.6} \\
            
            \cmidrule{3-5}
            
            & & \multirow{4}{*}{\texttt{FeSem}} & 2 & 75.6\scriptsize{$\pm$ 0.2} \\
            & & & 3 &75.5\scriptsize{$\pm$ 0.5} \\
            & & & 4 & 75.0\scriptsize{$\pm$ 0.1} \\
            & & & 5 &74.9\scriptsize{$\pm$ 0.1} \\
            
            \bottomrule
        
        \end{tabular}
    \end{adjustbox}
\end{table}

\section{Further Experiments} \label{app:other}
In Table \ref{tab_app:fl-algs} we show that \shortname is orthogonal to FL aggregation, which means that any algorithm can be easily embedded in our clustering setting, providing beneficial results, increasing model performance.
\begin{table}[t]
    \caption{\small{\shortname is orthogonal to FL aggregation algorithms, improving their performance in heterogeneous scenarios (Cifar100 with $\alpha = 0.5$ and Femnist with $\alpha = 0.01$). This shows that \shortname and clustering are beneficial in this scenarios.  }}
    \label{tab_app:fl-algs}
    \centering
    \small
    \setlength{\tabcolsep}{4pt} %
    \renewcommand{\arraystretch}{1.1} %
    \begin{adjustbox}{width=.5\linewidth}
    \begin{tabular}{l|cc|cc}
        \toprule
        \textbf{FL method} &  \multicolumn{2}{c|}{\textbf{Cifar100}} & \multicolumn{2}{c}{\textbf{Femnist}} \\
         & No Clusters & \shortname & No Clusters & \shortname \\
        \midrule
        FedAvg &  41.6 {\scriptsize$\pm$ 1.3} & \textbf{53.4} {\scriptsize$\pm$ 0.4} & 76.0 {\scriptsize$\pm$ 0.1} & \textbf{76.1} {\scriptsize$\pm$ 0.1} \\ 
        FedAvgM &  41.5 {\scriptsize$\pm$ 0.5} & \textbf{50.5} {\scriptsize$\pm$ 0.3} & {83.3} {\scriptsize$\pm$ 0.3} & \textbf{83.3} {\scriptsize$\pm$ 0.4} \\ 
        FedProx &  41.8 {\scriptsize$\pm$ 1.0} & \textbf{49.1} {\scriptsize$\pm$ 1.0} & 75.9 {\scriptsize$\pm$ 0.2} & \textbf{76.3} {\scriptsize$\pm$ 0.2} \\ 
        \bottomrule
    \end{tabular}
\end{adjustbox}
\end{table}
\begin{figure}
    \centering
    \includegraphics[width=\linewidth]{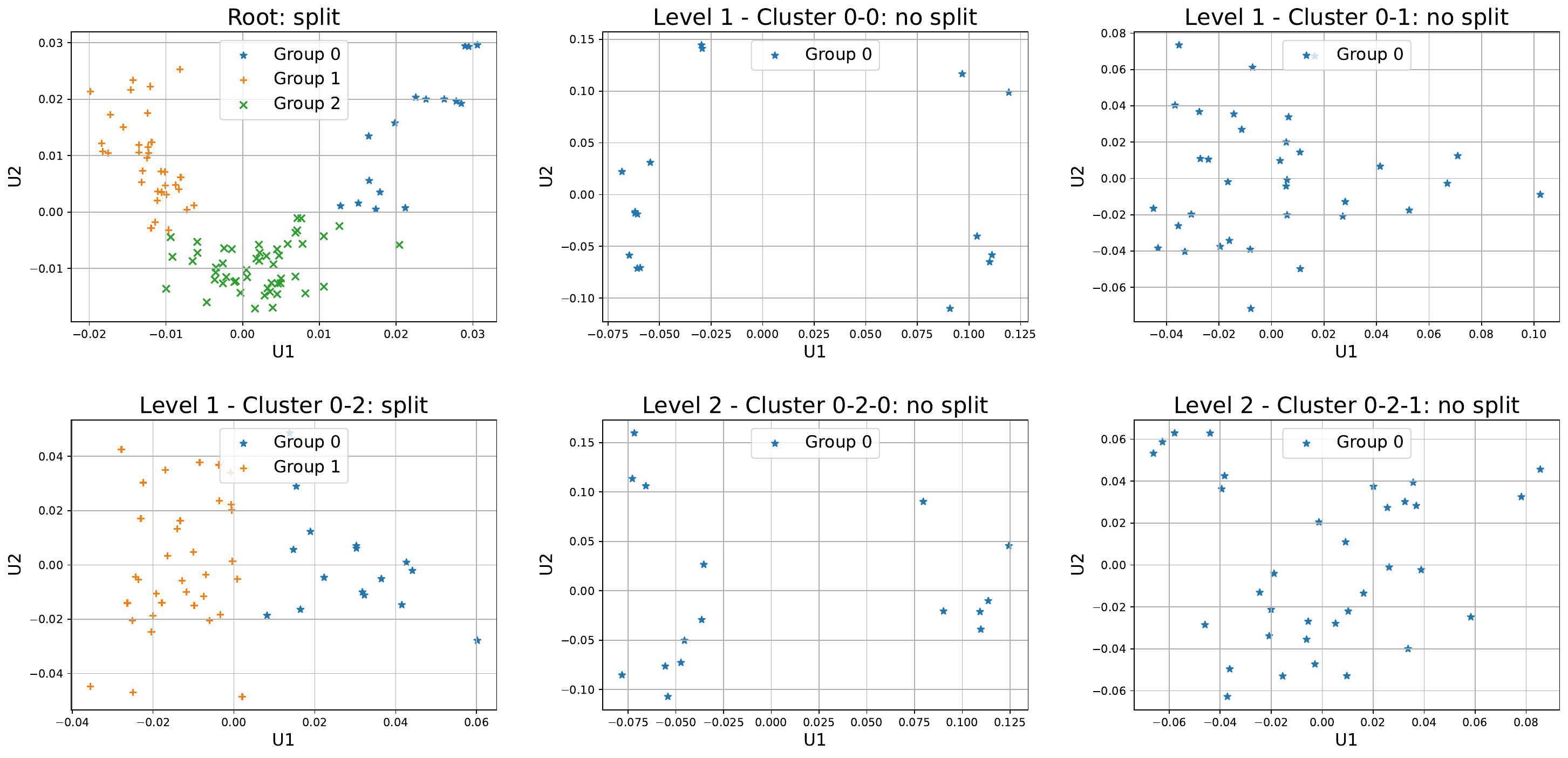}
    \caption{\small{Cluster evolution with respect to the recursive splits in \shortname on Cifar100, projected on the spectral embedded bi-dimensional space. From left to right, top to bottom, we can see that \shortname splits the client into cluster, until a certain level of intra-cluster homogeneity is reached }}
    \label{fig:treelevels}
\end{figure}
\begin{figure}[htbp]
    \centering
    \includegraphics[width=0.6\linewidth]{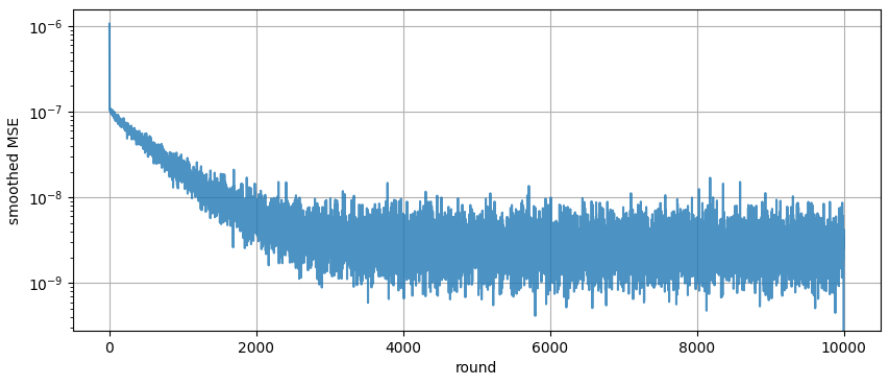}
    \caption{\small{Interaction matrix convergence: on the $y$-axis MSE in logarithmic scale w.r.t. communication rounds in the $x$-axis on Cifar10, with Dirichlet parameter $\alpha = 0.05$.}}
    \label{fig:mse_conv}
\end{figure}
Figure \ref{fig:hom-het} illustrates the clustering results corresponding to varying degrees of heterogeneity, as described in Section \ref{sect:ablation}. As per \shortname, the detection of clusters based on different levels of heterogeneity in the Cifar10 dataset is achieved. Specifically, an examination of the interaction matrix reveals a clear distinction between the two groups.
\begin{figure}[h]
    \centering
    \includegraphics[width=1\linewidth]{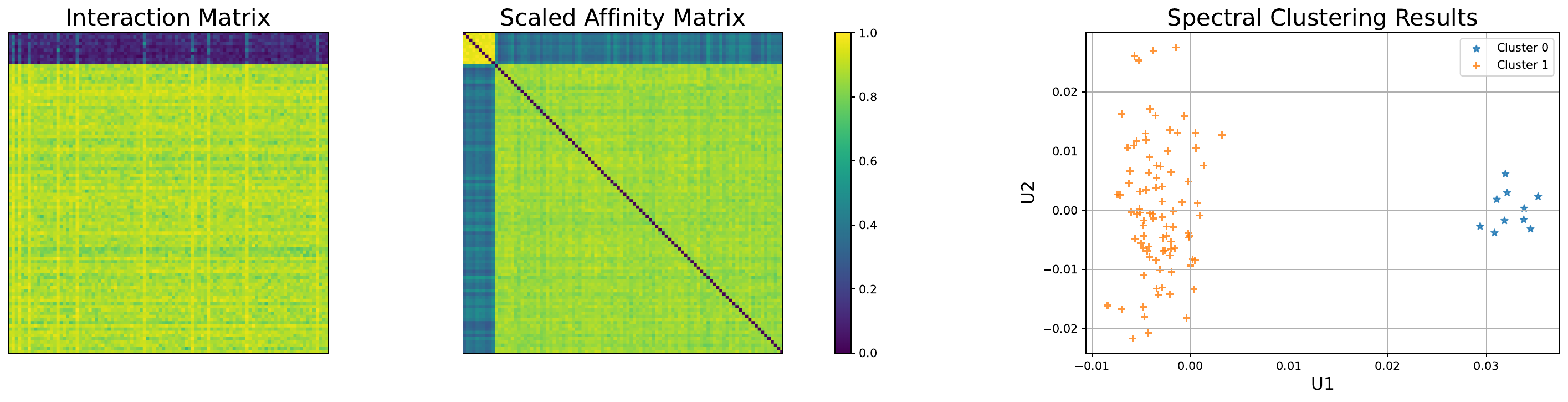}
    \caption{\small{Homogeneous (Cifar10 $\alpha = 100$) vs heterogeneous clustering (Cifar10 $\alpha = 0.05$). The interaction matrix at convergence and the corresponding scaled affinity matrix are on the left. The scatter plot in the 2D plane with spectral embedding is on the right. It is possible to see that the algorithm perfectly separates homogeneous clients (orange) from heterogeneous clients (black) }}
    \label{fig:hom-het}
\end{figure}
In Figure \ref{fig:hom}, we show that in class-balanced scenarios with small heterogeneity, like Cifar10 with $\alpha = 100$, \shortname successfully detects one single cluster. Indeed, in homogeneous scenarios such as this one, the model benefits from accessing more data from all the clients.
\begin{figure}[h]
    \centering
    \includegraphics[width=\linewidth]{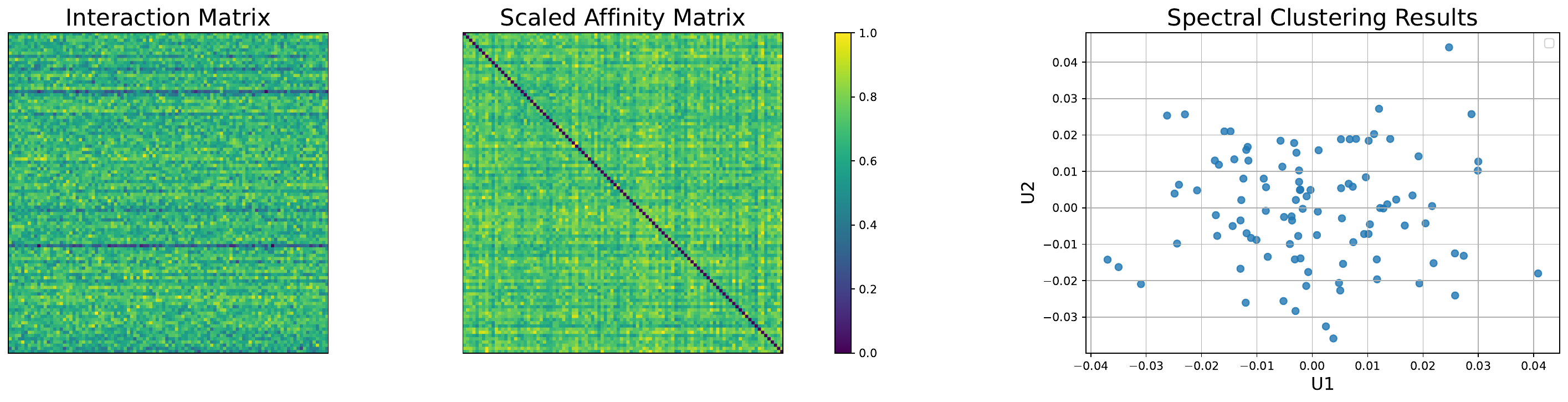}
    \caption{\small{Homogeneous case (Cifar10 $\alpha = 100$).  The interaction matrix at convergence and the corresponding scaled affinity matrix are on the left. The scatter plot in the 2D plane with spectral embedding is on the right. In the homogeneous case where no clustering is needed, \textit{FedGW} does not split the clients.}}
    \label{fig:hom}
\end{figure}

Figure \ref{fig:mse_conv} shows how the MSE converges to a small value as the rounds increase for a Cifar10 experiment.

As Figure \ref{fig:class_distr} illustrates, \shortname partitions the Cifar100 dataset into clients based on class distributions. Each cluster's distribution is distinct and non-overlapping, demonstrating the algorithm's efficacy in partitioning data with varying degrees of heterogeneity.
\begin{figure}[t]
    \centering
    \includegraphics[width=\linewidth]{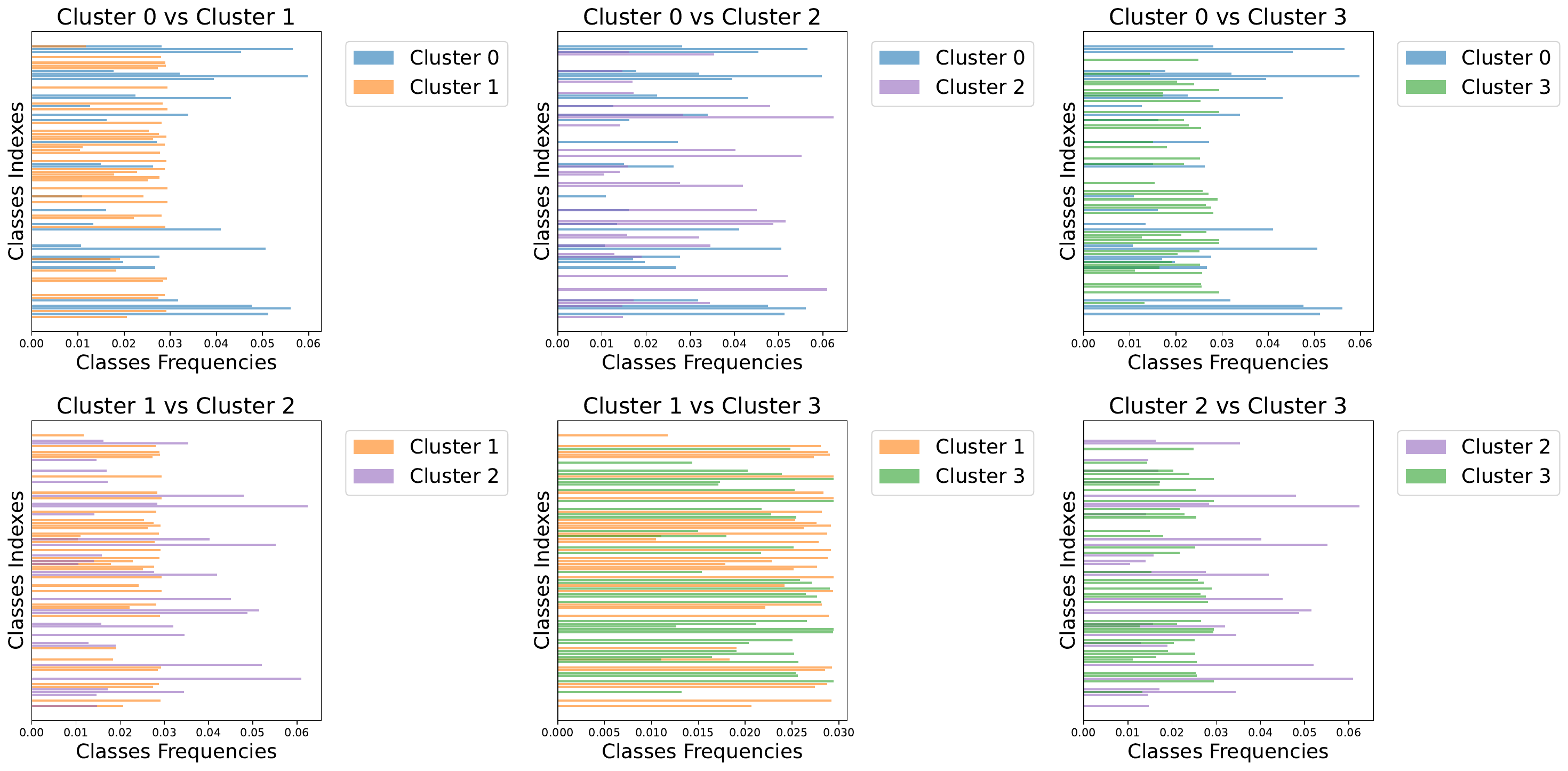}
    \caption{\small{Class distributions among distinct clusters as detected by \shortname on Cifar100. Specifically, we examine the class distributions for each pair of clusters, demonstrating that (1) the clusters were identified by grouping differing levels of heterogeneity and (2) there is, in most cases, an absence of overlapping classes.}}
    \label{fig:class_distr}
\end{figure}
In Figure \ref{fig:dom_ablation}, we report the domain detection on Cifar100, where 40 clients have clean images, 30 have noisy images, and 30 have blurred images. Table \ref{tab:dom_abl} shows that \shortname performs a good clustering, effectively separating the different domains.
\begin{figure}
    \centering
     
    \includegraphics[width=\linewidth]{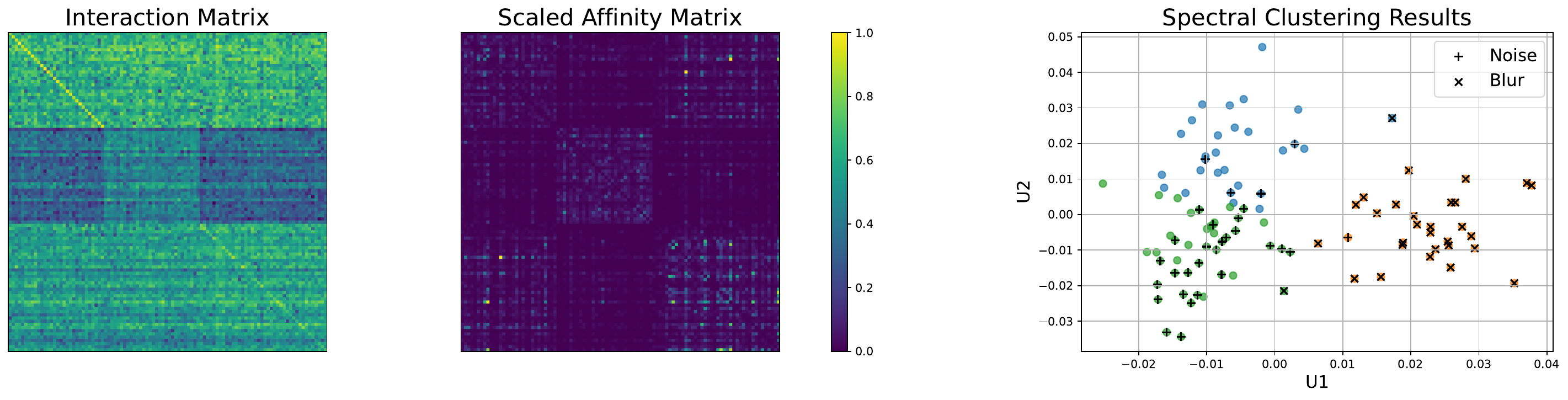}
    \caption{\small{\shortname in the presence of domain imbalance. Three domains on Cifar100: clean clients (unlabeled), noisy clients (+), and blurred clients (x). \textit{Left}: is the interaction matrix $P$ at convergence from which it is possible to see client relations. \textit{Center}: The affinity matrix $W$ computed with respect to the UPVs extracted from $P$, and on which \texttt{FedGW\_Clustering} is performed. We can see that \shortname clusters the clients according to the domain, as proved by results in Table \ref{tab:dom_abl}.}}
    \label{fig:dom_ablation}
   
\end{figure}

\end{document}